\documentclass[letterpaper]{article} 
\usepackage{aaai2027}                
\usepackage[hyphens]{url}            
\usepackage{graphicx}                
\usepackage{natbib}                  
\usepackage{caption}
\DeclareCaptionStyle{ruled}{labelfont=normalfont,labelsep=colon,strut=off} 
\usepackage{amsmath,amssymb,amsthm,booktabs,array}
\usepackage{algorithm,algpseudocode}

\usepackage{cleveref}

\graphicspath{{figs/}{./figs/}}

\newtheorem{theorem}{Theorem}
\newtheorem{proposition}{Proposition}
\newtheorem{corollary}{Corollary}
\theoremstyle{definition}
\newtheorem{assumption}{Assumption}

\theoremstyle{remark}
\newtheorem{remark}{Remark}
\crefname{algorithm}{Algorithm}{Algorithms}
\Crefname{algorithm}{Algorithm}{Algorithms}
\crefname{assumption}{Assumption}{Assumptions}
\Crefname{assumption}{Assumption}{Assumptions}

\newcommand{\Lset}{\ensuremath{\mathcal{L}}}      
\newcommand{\Cset}{\ensuremath{\mathcal{C}}}      
\newcommand{\Sset}{\ensuremath{S}}                
\newcommand{\Sbar}{\ensuremath{\bar{S}}}
\newcommand{\rcls}{\ensuremath{g_{\theta}}}       
\newcommand{\gen}{\ensuremath{q_{\phi}}}          
\newcommand{\yhat}{\ensuremath{\hat{y}}}
\newcommand{\phat}{\ensuremath{\hat{p}}}
\newcommand{\acoal}{\ensuremath{A}}
\DeclareMathOperator*{\argmax}{arg\,max}
\newcommand{\amin}{a_{\min}}

\newcommand{\rhocpl}{\rho_{\mathrm{cpl}}}

\title{Conditional Validity for Adaptive Modality Acquisition:\\When the Policy Chooses Its Own Calibration Group}
\author{Melika Baghi}
\affiliations{Georgia Institute of Technology \\ Atlanta, GA, USA}

\begin{document}
\maketitle

\begin{abstract}
A multimodal system may begin inference holding only some of its inputs and may
acquire the rest at a cost. With adaptive acquisition, the policy determines which inputs are ultimately observed, so we state risk control conditional on that terminal input pattern. Conditional calibration normally assumes the grouping map is fixed independently of the calibration sample, which policy-induced grouping does not satisfy. We characterize when pattern-conditional risk control remains valid and give two finite-sample constructions: threshold-free routing with calibration applied at the
terminal pattern, and simultaneous validation of complete policy--pattern pairs,
which lets calibration data select the deployed policy. A counterexample shows that validity proved for a calibration-independent grouping map need not transfer once the policy makes the terminal group calibration-dependent. We call the resulting method RouteCert. On a clinical electrocardiogram task with a staged, cost-ordered lead protocol, the deployed policy answers $71.2\%$ of held-out patients at an observed $7.4\%$ disagreement with
the cardiologist's diagnosis at $48.8\%$ of the prespecified ordinal cost of acquiring every
stage, and all three acquisition stages are validated separately. On masked multimodal benchmarks, validating pointwise at each terminal pattern holds observed worst-pattern selective risk, measured against the full-information reference decision rather than the true label, at $0.034$ where a pooled design reaches $0.145$ against a $0.10$ cap, at a comparable answered fraction ($0.350$ vs $0.342$); under the budget-matched simultaneous comparison the answered fraction falls to $0.305$.
\end{abstract}

\section{Introduction}
\label{sec:intro}

Multimodal systems are increasingly deployed where the inputs arrive incomplete and
further inputs can be obtained at a price. Inference-time modality selection for
incomplete multimodal classification is an active topic \citep{du2026dymo}, active
feature acquisition has been surveyed as a field in its own right
\citep{aronsson2025afa}, and cohort-level acquisition has been studied for settings
where the acquisition decision itself carries cost \citep{rheude2025cama}. In each
case a system holds some sources, a sensor channel, an assay, a camera view, and
may buy others for time, money or compute. A body-worn sensor rig can power up a
second accelerometer, a molecular-typing pipeline can run one more omics assay, and
a retrospective study can pull a scan that was archived but never read. The
decision is made per input: whether the evidence in hand suffices to commit, or the
system should pay for more.

Three lines of work bear on this and answer different questions. Missing-modality
methods return a point prediction \citep{du2024tip,liang2023pid}, dynamic acquisition
buys whichever source most raises expected accuracy per unit cost
\citep{du2026dymo,ma2019eddi}, and conformal and selective prediction calibrate a
fixed predictor over a grouping rule fixed independently of the calibration data
\citep{angelopoulos2024crc,fan2025maskcp}. The combination we study is certification
conditional on the terminal pattern that the selected acquisition policy itself
produces, which \cref{sec:related} places against each.

The question this paper answers is how a system can retain finite-sample guarantees
conditional on its terminal acquisition pattern when that pattern is generated by an
adaptive policy. The obstacle is an assumption mismatch rather than an error in any
existing argument. Split conformal prediction calibrates a cutoff $\tau$ so that a
future exchangeable case scores at or below it with probability $1-\alpha$, and the
grouping it conditions on is fixed in advance. Let acquisition instead stop as soon
as a case's score clears $\tau$: the group a case lands in is then decided by the
same number the group is graded against, and the fixed-map argument no longer
applies to it. \Cref{rem:sharp} makes this precise with a construction on which
direct transfer gives nothing, which is why an additional construction is needed
rather than a tighter constant. \Cref{fig:routing} shows the boundary and the two
certifications that respect it.

We develop those certifications and instantiate them. We call the resulting
acquisition-and-certification method RouteCert. A loop draws plausible completions of
the missing sources, reads from a reference model the set of answers still in play,
and while more than one answer survives buys the source or small coalition with the
best expected shrinkage per unit cost, never treating completions as acquired
evidence. Once it stops, a single guarantee is applied at whatever pattern the case
reached. The object certified is the complete policy together with the pattern it
produces, which is what licenses calibration-dependent selection. Our experiments cover four real datasets, three multimodal benchmarks under controlled masking, the protocol under which calibration can replay each policy's purchases, and a staged clinical electrocardiogram protocol.

\begin{itemize}\itemsep0pt \parskip0pt \topsep2pt
\item \textbf{Validity boundary and certification framework.} We formalise
pattern-conditional certification for adaptive modality acquisition, identify when
fixed-group calibration arguments apply (\cref{asm:polcal,rem:sharp}), and give two
finite-sample constructions (\cref{thm:coverage,prop:rcps}). Certifying complete
policy--pattern pairs simultaneously then lets the deployed policy be selected on the
same calibration data while every answered pattern keeps its cap, at a stated
finite-sample cost (\cref{thm:aware,prop:cost}).
\item \textbf{Acquisition instantiation.} We introduce RouteCert, a completion-based,
cost-aware coalition acquisition rule that operates inside these conditions
(\cref{alg:routecertmain}).
\item \textbf{Empirical evaluation.} We measure terminal-pattern risk control
against pooled recalibration, the cost of calibration-aware routing, when coalitions
help, and how fragmentation bounds conditional resolution (\cref{sec:exp}).
\end{itemize}

\section{Related Work}
\label{sec:related}

Two questions separate the closest work: whether the deployed policy or threshold
may be selected using calibration data, and whether the guarantee is conditioned on
a group that the selected policy creates. Missing-modality prediction and dynamic acquisition answer
neither. Imputation and
robust-representation methods return a point prediction
\citep{du2024tip,palumbo2024cmvae,tsai2019multimodal,arevalo2017gmu,liang2023pid},
and dynamic modality selection and active feature acquisition choose whichever
source most raises expected accuracy or information per unit cost
\citep{du2026dymo,he2024efficient,panda2021adamml,ma2019eddi,li2021gsmrl,covert2023dime,aronsson2025afa,rheude2025cama},
without attaching a finite-sample guarantee to the resulting decisions.

Conformal and selective prediction supply guarantees but fix the grouping map.
Group-conditional, shift-weighted and risk-controlling extensions
\citep{vovk2005conformal,angelopoulos2021conformal,romano2020classification,sadinle2019least,vovk2003mondrian,gibbs2025conditional,tibshirani2019weighted,fannjiang2022feedback,bates2021rcps,angelopoulos2021ltt,angelopoulos2024crc},
abstention methods
\citep{chow1970optimum,geifman2019selectivenet,garciagalindo2024,scrc2025} and
mask-conditional methods \citep{zaffran2023missing,fan2025maskcp,any2any,chandy2026shapley}
allow the group to depend on the input but require the map assigning it to be fixed independently of calibration; post-selection guarantees \citep{jin2023selection} take the selection rule as given. \Citet{hoarau2026alma} combine active learning with fixed-order test-time acquisition, stopping when a modality-specific conformal set becomes a singleton, with a marginal statement per predictor. We instead study guarantees conditional on the terminal pattern an adaptive policy produces, including when that policy is selected using calibration data.

Concurrent work relaxes the other axis. \Citet{prinster2026cpc} and
\citet{yu2026adaptivecrc} allow calibration-dependent selection with the evaluation
group fixed, and BCEA \citep{xu2026bcea} incorporates a fixed visual-evidence acquisition policy
into the score and recalibrates globally on post-acquisition outcomes. These
directions are complementary to ours: they hold the evaluation group fixed and move
the selection, whereas we certify conditional on the terminal pattern that the
selected policy produces, treating that pattern as part of the certified object. Our pooled recalibration arm
transfers BCEA's calibration design to our setting and is not a reproduction of
BCEA on its native claim-grounding task (\cref{tab:pervsglobal}).

\section{Problem Formulation}
\label{sec:problem}

A case carries $M$ sources; a benchmark's sensor streams or feature blocks each
count as one. Its \emph{pattern} is the subset it observes,
$\Sset\subseteq\{1,\dots,M\}$, with missing complement $\Sbar$ and observed blocks
$x_\Sset$. Missing source $m$ costs $c_m$ under a budget $B$, and a group
$\acoal\subseteq\Sbar$ costs $c(\acoal)=\sum_{m\in\acoal}c_m$. An \emph{acquisition
policy} $d$ maps a case to a purchase path: from the initial pattern it repeatedly
buys an affordable group and updates $\Sset$, or stops. Writing
$\Sset_{\mathrm{fin}}(d,x)$ for the pattern where it stops, $d$ then answers with a
label or declines, so a policy is a complete decision rule: what to buy, when to
stop, and whether to answer.

Two targets are used. The default is the \emph{reference answer}
$y^{\mathrm{full}}=\argmax_y \rcls(x)_y$, the label a model trained on complete
inputs would give if every source were present, which measures whether a
partial-information answer is stable; where calibration examples carry ground truth
we target the \emph{true label} instead. For a policy $d$ and pattern $\Sset$, the
answer rate $a_\Sset(d)$ is the fraction of cases reaching $\Sset$ that $d$ answers
and the \emph{selective risk} $R_\Sset(d)$ is the probability that an answered case
there is wrong against the chosen target. We want a pattern-conditional cap,
$R_\Sset(d)\le\alpha$ at every answered pattern, holding with probability at least
$1-\delta$ over a calibration sample of $n$ cases from the same distribution.

We distinguish policies by whether routing may use calibration-derived quantities. A policy is \emph{calibration-blind} if its
purchases and stopping depend only on the input and the models, so
$\Sset_{\mathrm{fin}}$ is a fixed function of the input, and
\emph{calibration-aware} otherwise. The distinction matters because the conditioning
group is an output of the policy: if a cutoff $\tau$ estimated from the calibration
sample also decides where acquisition stops, changing $\tau$ changes which cases
land in $\Sset$, so the population against which $\tau$ is evaluated moves with
$\tau$ itself and the exchangeability step of the split-conformal argument no longer
applies. We therefore fix, before seeing the certification data, a
\emph{pre-registered family} $\mathcal{D}$ of complete policies with the set
$\mathcal{S}$ of patterns it can produce, and certify pairs $(d,\Sset)$ jointly.
Selection of the deployed member from $\mathcal{D}$ may then use that data.

\section{Methodology}
\label{sec:method}

RouteCert has two phases: an acquisition loop, then a single guarantee. \Cref{alg:routecertmain} states it precisely. The residual set only routes, and Phase~2 alone carries the guarantee (\cref{tab:objects}).

\paragraph{Models and the residual answer set.}
Two models, both fit on data disjoint from calibration, do the work: a completion model $\gen(z_{\Sbar}\mid x_\Sset)$ that samples plausible values for the missing blocks, and the reference model \rcls\ of \cref{sec:problem}. Drawing $K$ completions $z^{(k)}\sim\gen(\cdot\mid x_\Sset)$ gives the \emph{residual label set}
\[
  \Lset(x_\Sset)=\bigl\{\argmax_y \rcls(x_\Sset,z^{(k)}_{\Sbar})_y : k\le K\bigr\},
\]
the answers still in play once the missing sources are accounted for. Averaging over completions gives $\phat_y(x_\Sset)=\frac{1}{K}\sum_k\rcls(x_\Sset,z^{(k)}_{\Sbar})_y$, whose top class $\yhat$ is the emitted answer, with nonconformity score $s(x_\Sset,y)=1-\phat_y(x_\Sset)$. Phase~2 comes in two forms, each fit per pattern and neither consulted before the
loop stops. \textbf{RouteCert-Risk} is the deployed default: it answers when
$\phat_{\yhat}(x_\Sset)$ clears a validated cutoff $\lambda_\Sset$.
\textbf{RouteCert-Cov} forms $\Cset_\Sset(x_\Sset)=\{y:s(x_\Sset,y)\le\tau_\Sset\}$ at
the conformal quantile $\tau_\Sset$ and answers when that set is a singleton. All headline selective-risk results use RouteCert-Risk, whose cutoff is selected and validated on disjoint calibration subsets (\cref{prop:rcps}); \cref{thm:coverage} instead gives pattern-conditional coverage for the set-valued RouteCert-Cov. A \emph{coalition} is a group of sources bought together, scored by a look-ahead over groups of size at most $r_{\max}$.

While \Lset\ holds more than one answer, RouteCert scores small coalitions by expected reduction of \Lset\ per unit cost, buys the best, and repeats, stopping when \Lset\ collapses, no affordable group shrinks it, or the budget is spent. A candidate's block is not available before it is bought, so its gain $\widehat{\Delta}(\acoal)$ is estimated by averaging $\bigl(|\Lset(x_\Sset)|-|\Lset(x_\Sset,\tilde{x}_\acoal)|\bigr)/c(\acoal)$ over $J$ hypothetical versions $\tilde{x}_\acoal\sim\gen(\cdot\mid x_\Sset)$ of the group, each inner residual set recomputed from $K$ fresh completions, with the true block revealed only after purchase (\cref{alg:deploy}). With $r_{\max}{=}2$ each round scores the missing singles and, only when no single helps, the missing pairs, so the action count grows as $M+\binom{M}{2}$ rather than $2^M$ ($1.3$\,ms per case at $M{=}5$). This score is a design choice inside the freedom the theory leaves, not part of any guarantee. What matters for the guarantee is that everything in the loop depends only on the input, the models and the loop's own seeded randomness: no calibrated threshold is read. Only after it stops does Phase~2 consult calibration, at the terminal pattern, with parameters fit on calibration cases pushed through the same loop, so a case can stop the loop at $|\Lset|{=}1$ and still decline.

Acquired blocks are replaced by their observed values, so completions only
marginalize the sources still missing. Completion-model dependence and an optional
true-label screen are in \cref{app:methoddetails}; unless stated, results use the
reference-relative guarantee.        \begin{algorithm}[t]
\small
\caption{RouteCert at deployment: acquisition, then one guarantee.}
\label{alg:routecertmain}
\begin{algorithmic}[1]
\Require case $x$ with pattern $\Sset$, budget $B$, costs $c$, look-ahead order $r_{\max}$,
         completion model $\gen$, reference model $\rcls$, per-pattern cutoffs
         $\{\lambda_\Sset\}$ fitted on calibration
\State draw $K$ completions of the missing blocks and form $\Lset(x_\Sset)$
\While{$|\Lset(x_\Sset)|>1$ and some group is affordable}
  \State score each affordable single source by expected reduction of $|\Lset|$ per
         unit cost, estimated from completions
  \State \textbf{if} some single scores positive \textbf{then} buy the best single
  \State \textbf{else} score affordable groups of size up to $r_{\max}$, buy the best,
         and \textbf{break} if none scores positive
  \State reveal the purchased blocks; update $\Sset$, $B$, and $\Lset$
\EndWhile
\State $\Sset_{\mathrm{fin}}\gets\Sset$;\quad
       $\phat\gets$ completion-averaged posterior at $\Sset_{\mathrm{fin}}$
\If{$\max_y\phat_y \ge \lambda_{\Sset_{\mathrm{fin}}}$}
  \State \Return $\argmax_y\phat_y$
\Else
  \State \Return \textsc{decline}
\EndIf
\end{algorithmic}
\end{algorithm}

\section{Validity After Adaptive Acquisition}\label{sec:guarantee}

Throughout, $\alpha$ is the target risk or miscoverage level and $\delta$ the probability that a finite-sample guarantee fails, both $0.10$ unless noted. The decision is per input but the guarantee is not: risk is controlled over a population of future cases, not for any single prediction.

The theoretical contribution is \cref{thm:coverage,thm:aware} together with the
validated rule of \cref{prop:rcps}: three certified objects, differing in what the
guarantee attaches to and in whether routing may read a calibrated quantity. \Cref{thm:coverage} gives pattern-conditional coverage for the
set-valued RouteCert-Cov layer; \cref{prop:rcps} gives answered-case selective-risk
validation for the deployed RouteCert-Risk cutoff; \cref{thm:aware} certifies
complete policy--pattern pairs simultaneously. \Cref{rem:sharp} marks the boundary
these constructions are built to respect. \textbf{Regime~A, threshold-free routing:} if the policy never reads calibration
the terminal pattern is a fixed function of the input, so Mondrian split-conformal
prediction \citep{vovk2003mondrian} applies with the policy's own output as
taxonomy (\cref{asm:polcal,thm:coverage}). \textbf{Regime~B, calibration-aware policy-family certification:} \cref{thm:aware} certifies whole policies over a family fixed in advance, so the one deployed may be picked, and may route, on the certification data itself.

\Cref{rem:cost} and \cref{prop:cost} collect what either route costs, and proofs are in \cref{app:theory}.

\begin{assumption}[Policy-consistent calibration]\label{asm:polcal}
Let $\pi$ be the acquisition loop, mapping an input to its purchases and final
pattern $\Sset_{\mathrm{fin}}$ without reading any calibrated quantity. $\pi$ may
be randomized; its auxiliary randomness is drawn independently of calibration and
treated as part of the input, so exchangeability runs on the augmented pair. The
models and $\pi$ are fit on data disjoint from the calibration set, calibration
and test points are i.i.d., and every calibration point is routed by $\pi$ before
any pattern fits its own parameter (\cref{app:theory}).
\end{assumption}

\noindent
RouteCert's default loop stops on $|\Lset|$, reads no calibrated number, and so
complies; sample splitting extends the condition to a policy fitted on one fold and
graded on a disjoint one.

\begin{proposition}[Boundary of fixed-map conditional guarantees]\label{rem:sharp}
There are a score distribution and a routing rule, measurable in the input and the
calibration sample, for which the group-conditional conclusion holds at level
$1-\alpha$ for every grouping map fixed independently of that sample, while the
deployed rule attains pattern-conditional coverage $0$. Take i.i.d.\ $U[0,1]$
scores, let $\tau$ be the calibration quantile, route a case into $\Sset$ when its
score exceeds $\tau$, and form the answer set inside $\Sset$ by the same rule
$s\le\tau$. Every case routed to $\Sset$ is excluded from its own answer set. The
gap is therefore not a matter of constants: no bound obtained by holding the
grouping map fixed can be transported to a map the calibration sample defines.
\end{proposition}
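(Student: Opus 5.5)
The construction in the statement already fixes the counterexample, so the proof checks two things about it. First, the fixed-map side: for every grouping map fixed independently of the calibration sample, the usual Mondrian guarantee holds. Second, the deployed side: the calibration-defined routing gives coverage exactly $0$ on the routed group.

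\textbf{Setup.} Scores $s_1,\dots,s_n,s_{n+1}$ are i.i.d.\ $U[0,1]$ with the test score $s_{n+1}$. To keep the fixed-map side literally true, I take the answer set to be the score sublevel set, and the group-conditional cutoff for a map $G$ fixed in advance to be computed from the calibration points in each group. We set $\tau=s_{(\lceil (n+1)(1-\alpha)\rceil)}$, the calibration quantile.

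\textbf{Fixed-map side.} For any $G$ chosen independently of $(s_1,\dots,s_n)$, the augmented pairs $(G(x_i),s_i)$ are exchangeable. Conditioning on the group of the test point, the points in that group are therefore exchangeable. The standard Mondrian rank argument \citep{vovk2003mondrian}, which is the fixed-taxonomy case used for \cref{thm:coverage}, then gives $\Pr(s_{n+1}\le\tau_g\mid G(x_{n+1})=g)\ge 1-\alpha$. I do not reprove this; I only note that the construction meets its hypotheses for every such $G$. A specific instance is the trivial map, which recovers marginal split-conformal coverage for $\tau$ itself.

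\textbf{Deployed side.} The deployed map is $G_\tau(x)=\Sset$ if $s(x)>\tau$, and another pattern otherwise. It is measurable in the input and the sample, as claimed. On the event $\{G_\tau(x_{n+1})=\Sset\}$ we have $s_{n+1}>\tau$, so $s_{n+1}\notin\{s\le\tau\}$. The answer set formed inside $\Sset$ therefore excludes the true label or score, deterministically. Hence $\Pr(\text{cover}\mid G_\tau(x_{n+1})=\Sset)=0$, provided the conditioning event has positive probability.

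That event has probability $\Pr(s_{n+1}>\tau)$. By the rank argument this is at least roughly $\alpha-1/(n+1)>0$ for $\alpha>1/(n+1)$, and it is nonzero with continuous scores in any case. So the conditional statement is well defined and the gap is exactly $1-\alpha$.

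\textbf{Conclusion.} Since the conditional coverage is identically $0$ whatever $n$ and $\alpha$ are, no bound of the form $1-\alpha-\varepsilon(n)$ derived under fixed maps can hold for $G_\tau$. This gives the ``not a matter of constants'' clause.

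The main obstacle is the bookkeeping of what the fixed-map guarantee quantifies over. One must make sure that $G_\tau$ is not itself a fixed map in disguise. It is not, because $G_\tau$ changes with the sample. One must also make sure the per-group cutoff in the deployed rule is taken to be the pooled $\tau$, as the statement specifies, rather than being refit within $\Sset$. If it were refit inside $\Sset$ from routed calibration points, those points all exceed $\tau$, and the argument would need an adjusted version. I would handle this by fixing the deployed rule exactly as stated: the same $\tau$ is used for both routing and the set.
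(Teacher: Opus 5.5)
Your proof is correct and follows the paper's argument. The deployed side is the same observation that routing into $\Sset$ forces $s_\ast>\tau$ and hence exclusion from $\{s\le\tau\}$, and the fixed-map side is the Mondrian guarantee of \cref{thm:coverage}, which the paper invokes by noting that all its other hypotheses hold. One small correction to your positivity check: the routed event is nonnull only when $\alpha\ge 1/(n+1)$, since otherwise the quantile index exceeds $n$, $\tau=+\infty$, and no case is routed to $\Sset$, so ``nonzero in any case'' should be dropped.
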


\noindent
\textbf{Scope.} The proposition rules out unrestricted transfer
under arbitrary calibration dependence. It does not establish that
calibration-aware routing is unsafe in general: a particular policy may remain valid
under additional structure, stability, privacy or a bounded coupling, and
\cref{thm:aware} certifies such policies directly. The degradation is also
continuous rather than a knife edge: threshold reuse costs
$0.901{\pm}0.007\to0.888{\pm}0.007$ marginal coverage on CMU-MOSEI and decays to
$0.45$ under a synthetic sweep (\cref{fig:coupling}). Quantitative relaxations of
exchangeability bound this decay by a computable coupling term
\citep{barber2023beyond}; \cref{thm:aware} avoids paying it.

\noindent
\begin{figure*}[t]
\centering
\includegraphics[width=0.92\textwidth]{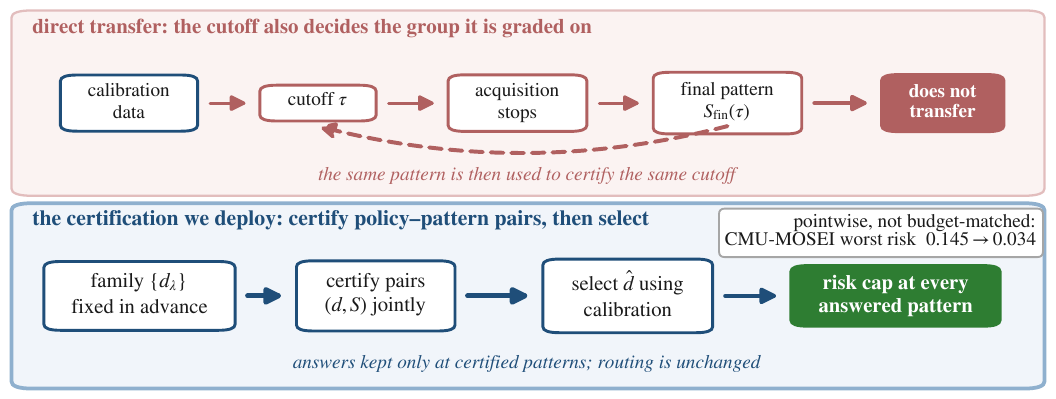}
\caption{\textbf{Validity boundary and two constructions for
policy-induced terminal patterns.} Above, the construction that does not transfer:
a cutoff $\tau$ estimated from calibration also decides where acquisition stops, so
it determines the terminal pattern $\Sset_{\mathrm{fin}}(\tau)$ that is then used to
certify $\tau$ itself. The dashed arrow marks that dependence, which
\cref{rem:sharp} shows is enough to void direct transfer of a fixed-map guarantee.
Below, the construction we develop and deploy: a family of complete policies $\{d_\lambda\}$ is fixed before certification, policy--pattern pairs $(d,\Sset)$ are certified jointly, a member $\hat d$ may then be selected on the same calibration data, and answers are kept only at certified patterns, leaving a selective-risk cap at every answered pattern (\cref{thm:aware}). At right, the measured effect of conditioning rather than recalibrating globally (\cref{tab:pervsglobal}).}
\label{fig:routing}
\end{figure*}

The deployed guarantee turns this condition into a two-split rule.

\begin{proposition}[RouteCert-Risk: the deployed rule]\label{prop:rcps}
Split each pattern's calibration cases in half. Pick a candidate confidence cutoff
on one half by any rule; on the other half test that single, now-fixed cutoff once,
with an exact Clopper--Pearson bound at confidence $1-\delta$ on the errors among
the cases it answers, and deploy it iff the bound is at most $\alpha$. Then with
probability at least $1-\delta$ over the calibration draw the pattern either
abstains or deploys a cutoff whose answered-case selective risk is at most $\alpha$,
using one exact test with no monotonicity assumption and no multiple-testing
correction.
\end{proposition}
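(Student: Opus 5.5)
The plan is to reduce the statement to the exactness of a single Clopper--Pearson upper bound, applied conditionally on everything fixed before the test. First I fix a pattern $\Sset$ and work under \cref{asm:polcal}. The loop $\pi$ reads no calibrated quantity, so each calibration case's terminal pattern is a fixed function of its augmented input. The cases routed to $\Sset$ are therefore i.i.d.\ draws from the conditional law of a fresh case given $\Sset_{\mathrm{fin}}=\Sset$, once we condition on how many land there. Splitting these cases in half uses auxiliary randomness independent of the data, so it leaves two independent i.i.d.\ samples $D_1,D_2$ from that conditional law. The cutoff $\hat\lambda_\Sset$ is chosen from $D_1$ by an arbitrary rule. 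Conditional on $D_1$ it is a constant, and $D_2$ is independent of it.

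Second, I condition on $D_1$ and on the answered indicators in $D_2$. For a fixed cutoff $\lambda$, let $A_\lambda$ be the event that a case at $\Sset$ has $\phat_{\yhat}\ge\lambda$. The population selective risk is $R_\Sset(\lambda)=\Pr(\text{error}\mid A_\lambda,\Sset)$. Given that $N$ cases of $D_2$ are answered, their error indicators are i.i.d.\ Bernoulli$(R_\Sset(\lambda))$, because the answered subsample of an i.i.d.\ sample is i.i.d.\ from the conditional law. Hence the error count is $E\sim\mathrm{Bin}(N,R_\Sset(\lambda))$ exactly.

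The exact Clopper--Pearson upper bound $U(E,N)$ at level $1-\delta$ satisfies $\Pr(U(E,N)<p)\le\delta$ for every $p$ and every $N$. For $N=0$ we set $U=1$, so the cutoff is not deployed and the pattern abstains. Now consider a bad pattern, one with $R_\Sset(\hat\lambda)>\alpha$. Deployment requires $U\le\alpha<R_\Sset(\hat\lambda)$, and this event has conditional probability at most $\delta$. Averaging over $N$ and $D_1$ gives an unconditional probability at most $\delta$ that the pattern deploys a cutoff with risk above $\alpha$. On the complement the pattern either abstains or deploys a cutoff with risk at most $\alpha$, which is the claim.

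This also shows why no monotonicity or correction is needed. Only one hypothesis, about a cutoff fixed by data independent of $D_2$, is ever tested on $D_2$. No union over candidate cutoffs arises, and no structure of $\lambda\mapsto R_\Sset(\lambda)$ is used. The guarantee is per pattern. If a simultaneous statement over all patterns were wanted, one would apply a union bound or $\delta/|\mathcal S|$. I would note this rather than claim it, since the statement is read per pattern.

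The main obstacle is the conditioning step. It must be justified that the cases reaching $\Sset$, and then the answered ones among them, form an i.i.d.\ sample from the right conditional law even though the number reaching $\Sset$ and the number answered are random. It must also be justified that the deployed test population matches. Both routing and answering must be functions of the augmented input and quantities independent of $D_2$, and this is exactly where \cref{asm:polcal} and the independence of $\hat\lambda_\Sset$ from $D_2$ enter. I would handle it by conditioning on the full vector of routing and answer indicators and invoking the independence of i.i.d.\ coordinates given their own indicator events.
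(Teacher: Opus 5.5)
Your proposal is correct and follows the paper's own proof. You fix the pattern and use the threshold-free routing of the policy-consistent calibration assumption to make that pattern's cases i.i.d.\ from the conditional law. You then condition on the selection half, so the cutoff is a constant, and on the number $n$ of answered validation cases, which makes the error count exactly $\mathrm{Binomial}(n, r_\Sset(\lambda_\Sset))$, and you finish with Clopper--Pearson exactness and the tower property, $n=0$ forcing abstention. Your explicit handling of the random per-pattern count and the independent random halving is slightly more careful than the paper's write-up, but the argument is the same.
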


\noindent
This is the rule behind every selective-risk number we report. A pattern answers
only with a cutoff that survived one exact test on data it was not chosen on, and
data-poor patterns abstain rather than guess (proof in \cref{app:theory}).

\begin{theorem}[RouteCert-Cov: per-pattern coverage]\label{thm:coverage}
Under \cref{asm:polcal}, for each terminal pattern $\Sset$ with
$\Pr[\Sset_{\mathrm{fin}}{=}\Sset]>0$ the answer set covers its target at
level $1-\alpha$: $\Pr[\,t\in\Cset_\Sset(x_\Sset)\mid\Sset\,]\ge1-\alpha$, the
probability taken over the joint draw of calibration set and test case. The
target $t$ is $y^{\mathrm{full}}$ by default and the true label under direct
ground-truth calibration, the argument being identical either way.
\end{theorem}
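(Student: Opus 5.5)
The plan is to reduce \cref{thm:coverage} to the standard Mondrian split-conformal argument \citep{vovk2003mondrian}, with the taxonomy given by the policy's own output $\Sset_{\mathrm{fin}}=\pi(x,\xi)$, where $\xi$ is the loop's auxiliary seed.

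\textbf{Step 1: the taxonomy is fixed before calibration.} By \cref{asm:polcal}, $\pi$, $\gen$ and $\rcls$ are fit on data disjoint from calibration, and $\pi$ reads no calibrated quantity. Following the assumption, I treat the augmented point $W=(x,\xi,t)$ as the unit of exchangeability. Then $\Sset_{\mathrm{fin}}$ is a deterministic measurable function of $W$ alone. The same holds for the completion-averaged score $s(x_{\Sset_{\mathrm{fin}}},y)=1-\phat_y$. I fold the $K$ completion draws into $\xi$, which is legitimate because they are also drawn independently of calibration. So each point carries a group label $G_i=\Sset_{\mathrm{fin}}(W_i)$ and a score $V_i=s(x_{i,G_i},t_i)$, both fixed functions of $W_i$.

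\textbf{Step 2: condition on the group.} Fix $\Sset$ with $\Pr[\Sset_{\mathrm{fin}}=\Sset]>0$. Condition on the number $N$ of calibration points with $G_i=\Sset$, and on the event $G_{n+1}=\Sset$. Because $W_1,\dots,W_{n+1}$ are i.i.d., the $N+1$ points in group $\Sset$ are i.i.d. draws from the conditional law of $W$ given $G=\Sset$. This conditioning is on a fixed measurable set, not one defined by the sample. Their scores are therefore exchangeable. With $\tau_\Sset$ set to the $\lceil(1-\alpha)(N+1)\rceil$-th smallest in-group calibration score (and $+\infty$ when that index exceeds $N$), the usual rank argument gives
\[
\Pr\bigl[V_{n+1}\le\tau_\Sset \mid N,\ G_{n+1}=\Sset\bigr]\ge 1-\alpha .
\]
Averaging over $N$ yields the statement, since $\{t\in\Cset_\Sset(x_\Sset)\}=\{V_{n+1}\le\tau_\Sset\}$. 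The argument never uses what $t$ is beyond its being a fixed function of the augmented point. So it covers $t=y^{\mathrm{full}}$ (a function of the full $x$ through $\rcls$) and the true label alike.

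\textbf{Main obstacle.} The delicate step is Step 1: verifying that nothing inside the loop or the score secretly depends on the calibration sample. In particular, the routing of calibration points must use the same $\pi$ and independent seeds, as the assumption's clause ``every calibration point is routed by $\pi$ before any pattern fits its own parameter'' requires. The in-group exchangeability also needs the per-case randomness to be i.i.d. across cases, not shared. I would handle this by defining $\xi_i$ explicitly as independent per-case seeds.

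Contrasting with \cref{rem:sharp} shows exactly where the proof would break if $\pi$ read $\tau$. In that case $G_i$ would depend on all of $W_{1:n}$, and the conditional i.i.d. step in Step 2 would fail.
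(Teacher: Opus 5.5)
Your proposal is correct and follows essentially the same route as the paper's proof. Both treat the routing randomness as part of an i.i.d.\ augmented point, so that $\Sset_{\mathrm{fin}}$ is a fixed Mondrian taxonomy. Both then condition on the test point routing to $\Sset$ and on the per-pattern count, apply the rank argument with the $\lceil(1-\alpha)(N+1)\rceil$-th order statistic (set to $+\infty$ when that index exceeds $N$), and integrate over the count. The one detail the paper adds is the remark on ties: the conservative quantile with acceptance $s_\ast\le\tau_\Sset$ keeps the bound without randomized tie-breaking, which your appeal to the usual rank argument implicitly covers.
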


\noindent
Among future cases ending at the same pattern the answer set contains its
target in at least a $1-\alpha$ fraction, averaged over the calibration draw. What
\cref{asm:polcal} buys is the right to condition on that pattern at all.

Splitting buys calibration-dependent routing by spending data twice and still forbids
the policy from reading the guarantee that grades it. Both restrictions come from
attaching the guarantee to a \emph{threshold at a pattern}; attach it to a
\emph{policy} and they lift. Fix in advance any family $\mathcal{D}$ of complete acquisition policies: each $d\in\mathcal{D}$ says what to buy, when to stop and what to answer, so it induces a final-pattern map $x\mapsto\Sset_{\mathrm{fin}}(d,x)$ and, at each pattern, a selective risk $R_\Sset(d)$ among the cases it answers there. Let $\mathcal{S}$ be the patterns the family can produce, a subset of the $2^M$
possible ones. Nothing asks $\mathcal{D}$ to be indexed or parametric: the object
certified is a policy and the group conditioned on is that policy's own output. Our
experiments instantiate $\mathcal{D}$ as a grid of global confidence cutoffs and as
a grid of per-pattern cutoff \emph{vectors}.

Concretely, take three policies indexed by cutoffs $\lambda\in\{0.6,0.7,0.8\}$, each
a complete rule saying what to acquire, when to stop, where it answers and where it
declines. A case reaching confidence $0.72$ after one acquisition is answered by
$d_{0.6}$ and $d_{0.7}$ there, while $d_{0.8}$ keeps acquiring and stops elsewhere.
All three are fixed before the certification sample is seen; certification then
evaluates every relevant pair at once and the calibration data may pick one. That is
data-dependent selection among fixed policies, not construction of a new policy
after seeing the results.

\begin{theorem}[Calibration-aware acquisition]\label{thm:aware}
Let $\mathcal{D}$ and $\mathcal{S}$ be fixed independently of the certification
sample, and let $\widehat{\mathcal{W}}$ be a set of pairs
$(d,\Sset)\in\mathcal{D}\times\mathcal{S}$ selected from that sample by any
procedure controlling the family-wise error rate at level $\delta$ for the null
hypotheses $H_{d,\Sset}:R_\Sset(d)>\alpha$. Then
\[
\Pr\bigl[\,R_\Sset(d)\le\alpha\ \ \text{for every }(d,\Sset)\in\widehat{\mathcal{W}}\,\bigr]\ \ge\ 1-\delta .
\]
Consequently a practitioner may choose any $\hat d\in\mathcal{D}$ \emph{using that
same sample} and deploy it with its answers restricted to the patterns $\Sset$ for
which $(\hat d,\Sset)\in\widehat{\mathcal{W}}$, declining at the rest, and every
answered pattern still carries the $\alpha$ cap. What is relaxed is the
\emph{selection}: the deployed policy, including where it buys and where it stops,
may depend arbitrarily on the certification data, whereas \cref{asm:polcal}
requires the routing itself to be independent of it. What is not relaxed is the
family, which must be fixed independently of that sample; the theorem licenses
calibration-dependent choice from a calibration-independent family. The
restriction must suppress answers only, since altering where the policy stops
would alter which cases reach the later patterns, and their certificates are
computed under $\hat d$; a policy that continues instead is a different member of a
larger family and is certified as one (\cref{cor:continue}).
\end{theorem}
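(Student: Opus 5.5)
The plan is to reduce \cref{thm:aware} to the definition of family-wise error control, using the fact that every hypothesis $H_{d,\Sset}$ is a fixed, non-random statement about the population once $\mathcal{D}$ and $\mathcal{S}$ are fixed independently of the certification sample.

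\textbf{Step 1: the null pairs are deterministic.} For each $d\in\mathcal{D}$ the final-pattern map $x\mapsto\Sset_{\mathrm{fin}}(d,x)$ and the answer rule are fixed functions of the input, the models and auxiliary randomness drawn independently of the sample. Hence $R_\Sset(d)$ is a population quantity and does not depend on the calibration draw. The set of true nulls
\[
\mathcal{H}_0=\{(d,\Sset)\in\mathcal{D}\times\mathcal{S}: R_\Sset(d)>\alpha\}
\]
is therefore deterministic. This is exactly the point where the calibration-independence of the family is used, and where the counterexample of \cref{rem:sharp} is excluded: there the group itself moved with the sample, so the corresponding null was random.

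\textbf{Step 2: the main inequality.} The event that some selected pair violates the cap is
\[
\{\exists (d,\Sset)\in\widehat{\mathcal{W}}:R_\Sset(d)>\alpha\}=\{\widehat{\mathcal{W}}\cap\mathcal{H}_0\neq\emptyset\}.
\]
This is precisely the event of at least one false rejection. FWER control at level $\delta$ states that this event has probability at most $\delta$, and taking complements gives the displayed bound.

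For concreteness, I would verify that a Bonferroni or fixed-sequence family of exact Clopper--Pearson tests, as in \cref{prop:rcps}, meets the FWER hypothesis. The one care needed is that each test runs on the calibration cases that $d$ routes to $\Sset$ and answers. Conditional on the count of such cases, these are i.i.d.\ draws from the conditional law defining $R_\Sset(d)$. This holds because routing under a fixed $d$ does not read the sample, so Clopper--Pearson stays exact conditionally and hence unconditionally.

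\textbf{Step 3: the consequence.} Let $\hat d$ be any measurable function of the sample. On the good event of Step 2, every pair in $\widehat{\mathcal{W}}$ satisfies its cap simultaneously. In particular this holds for the pairs $(\hat d,\Sset)\in\widehat{\mathcal{W}}$, however $\hat d$ was chosen, so selection needs no further correction.

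\textbf{Step 4: restriction suppresses answers only.} It remains to check that deploying $\hat d$ with answers restricted to certified patterns leaves the risk at those patterns equal to $R_\Sset(\hat d)$. Suppression changes neither purchases nor stopping, so the population reaching each $\Sset$ and the answers given there are unchanged. Declined patterns contribute no answered cases.

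I expect the main obstacle to be Step 4 together with the random-index subtlety in Step 3. One must argue that $R_\Sset(\hat d)$ means the fixed $R_\Sset(d)$ evaluated at $d=\hat d$, and not a quantity re-defined under the modified policy. The simultaneity of the good event handles this, provided the restriction does not alter routing; that is why the statement forbids continuation and defers it to \cref{cor:continue}.
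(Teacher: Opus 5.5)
Your proposal is correct and follows the paper's proof essentially step for step. Like the paper, you observe that each $d$ is a fixed map of the augmented input, so $R_\Sset(d)$ and the nulls $H_{d,\Sset}$ are fixed population objects; you then read the displayed bound off the definition of family-wise error control, cover any data-dependent $\hat d$ by the simultaneous good event, and note that declining, unlike continuing, leaves each certified pattern's population unchanged. The only difference is the example procedure: the paper uses Holm on exact binomial $p$-values and stresses that within-pattern $p$-values share answered cases and are dependent, whereas your Bonferroni and fixed-sequence tests are equally valid under arbitrary dependence, provided the sequence order is fixed independently of the certification sample.
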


\noindent
Because the statement holds simultaneously over pairs, selecting the deployed policy using the same sample
does not invalidate the certified pairs.

\textbf{Scope.} The family must be fixed independently of the certification sample,
or conditionally fixed using a disjoint selection split. Uncertified patterns are
handled by declining. Continuing to acquire at an uncertified pattern changes which
cases reach the later patterns, so it is a different complete policy and must be
represented and certified as one.

Family-wise control makes the event that $\widehat{\mathcal{W}}$ contains a true null
have probability at most $\delta$, so on its complement every listed pair satisfies
$R_\Sset(d)\le\alpha$ at once and an $\hat d$ chosen afterwards by any rule still lands
in a set whose members are all safe. Because the hypotheses are indexed by pairs,
certifying $(d,\Sset)$ certifies the risk of $d$ at a pattern $d$ itself produces.

Mondrian conditioning, exact binomial validation and family-wise testing are each
standard, and we claim no novelty for them individually. The contribution is
identifying the object they must be applied to once acquisition determines the
conditioning group. Learn-Then-Test already licenses selecting a configuration on the
data that grades it, but it certifies $R_\Sset(d)$ for a group fixed independently of
that data, and \cref{rem:sharp} shows this need not bound risk at the group the
policy actually produces. Indexing the hypotheses by pairs transports the guarantee to the group realized at deployment, which is why the certificate must be simultaneous over the family rather than pointwise at one configuration. A deployment that reaches an uncertified pattern and keeps buying rather than declining is a different complete policy, so its certificate no longer describes it; admitting that behavior costs $53$ further error-free cases per pattern at $\alpha{=}\delta{=}0.1$, $|\mathcal{S}|{=}8$ (\cref{app:continue,cor:continue}). The certified object thus constrains the deployment. On CMU-MOSEI no single cutoff is safe at every pattern, so the deployed policy withholds answers by declining rather than by buying more.

\begin{remark}[How small a group may be]\label{rem:cost}
A group can deploy a cutoff only if its validation half holds at least
$n_{\min}=\lceil\ln\delta/\ln(1-\alpha)\rceil$ answered cases, whatever the score,
model or data law; at $\alpha{=}\delta{=}0.1$, $n_{\min}{=}22$. Abstention on small
patterns is therefore forced by the finite-sample guarantee rather than by a loose
analysis, and the finest affordable partition is the finest one clearing the floor
in every cell (\cref{prop:coarsen,cor:adaptcoarsen}).
\end{remark}

\noindent
Writing $\kappa_\alpha=\ln(1/(1-\alpha))$, a correction dividing $\delta$ by a
family of $F$ hypotheses costs $\lceil\ln F/\kappa_\alpha\rceil$ cases per pattern on top
of $n_{\min}$ (\cref{prop:cost}). When the family is indexed by one scalar that
price can be avoided outright.

Because the hypotheses within one pattern are indexed by a single scalar, they can
instead be tested as a fixed sequence in an order fixed by data disjoint from the
certification sample: walk the grid in that order, spend the pattern's whole level
at each step, stop at the first non-rejection. That corrects only the
$|\mathcal{S}|$ patterns and leaves the grid free (\cref{prop:seq}), so the
pointwise, generic and sequential requirements are $22$, $71$ and $42$ error-free
answered cases. A poor ordering costs power, never validity.

\paragraph{The acquisition policy used here.}
Everything above holds for any routing satisfying \cref{asm:polcal}, so the acquisition rule is a design choice made inside that freedom, not part of the guarantee. We make one: buy small \emph{coalitions} rather than single sources, because residual-set reduction is not submodular on complementary inputs, so a policy that buys a source only while that source alone helps stalls exactly there (\cref{prop:synergy}). That motivates the choice; \cref{sec:exp} measures when it pays and when it does not.

Both guarantees target $y^{\mathrm{full}}$ by default. An audited bound on the
reference's own error rate (\cref{prop:truelabel}) and direct ground-truth
calibration (\cref{ssec:truelabel}) are the two routes to the true label. Validity never depends
on reference strength, attainable answer rate does.

\section{Experiments}\label{sec:exp}

\paragraph{Setup.}
These are controlled tests of policy-created calibration groups, conditional risk, complementarity and fragmentation, not a state-of-the-art comparison. Calibration must replay the modalities each policy would acquire, so the datasets are ones in which nominally missing sources remain recoverable; performance under naturally occurring missingness is not established. Four real datasets carry the paper: CMU-MOSEI \citep{zadeh2018mosei}, IEMOCAP \citep{busso2008iemocap} and MHEALTH \citep{banos2014mhealth} under masking, plus the PTB-XL clinical protocol. The wearable recordings are partitioned two ways (MHEALTH into three sensor packs for the risk-control tables, MHEALTH-5 into five for the coalition tables), and each claim uses only the datasets that support its acquisition setting (\cref{app:suite}).

\subsection{True-label correctness after adaptive acquisition}\label{ssec:truelabel}
Calibrating directly against ground truth, after each policy has run to its final
state, gives the strongest of the three guarantees (\cref{tab:truelabel}). Where the reference is weak the posterior inherits its errors and no cutoff passes
the true-label test, so the system abstains (CMU-MOSEI and IEMOCAP answer
$0.000$). Where it is strong, a usable operating point appears: RouteCert answers $46.7\%$ of
cases at $1.0\%$ true-label error on MHEALTH-5, and every arm sits well under the
$0.10$ cap. RouteCert answers the most while the forced arm attains the lowest
worst-pattern risk, and \cref{tab:maintl} gives the full sweep.

\begin{table}[t]
\centering\small
\setlength{\tabcolsep}{3.5pt}
\begin{tabular}{@{}llccc@{}}
\toprule
\textbf{Benchmark (ref.\ acc.)} & \textbf{Policy} & \textbf{Ans.} & \textbf{Risk} & \textbf{Worst} \\
\midrule
MHEALTH-5 ($0.97$) & cost-aware & $0.393$ & $0.013$ & $0.023$ \\
MHEALTH-5 ($0.97$) & forced & $0.435$ & $0.011$ & $0.020$ \\
MHEALTH-5 ($0.97$) & \textbf{RouteCert} & $\mathbf{0.467}$ & $\mathbf{0.010}$ & $0.022$ \\
\midrule
CMU-MOSEI, IEMOCAP & every rule & $0.000$ & abst. & abst. \\
\bottomrule
\end{tabular}
\caption{\textbf{Correctness, not decision stability.} RouteCert-Risk calibrated on
ground truth after end-to-end acquisition (per-pattern $\alpha{=}\delta{=}0.1$, five
seeds $\times$ $50$ resplits), the one risk column here that is a true-label
guarantee. A strong reference (in parentheses) admits a usable operating point and a
weak one abstains, ``abst.''\ marking an empty answered set.}
\label{tab:truelabel}
\end{table}

\paragraph{When coalitions are worth buying.}
Whether a coalition beats the best single source is predictable before deployment
from the \emph{needs-a-pair fraction}, the share of cases some pair resolves but no
single source does. Where it is substantial the coalition guarantees $+3$ to $+12$
points more, already net of the calibration cost of an extra terminal pattern, since
each needs a pool clearing $n_{\min}$ (\cref{tab:maintl}).

\subsection{Selective acquisition on a clinical ECG benchmark}\label{ssec:ptbxl}
PTB-XL
\citep{wagner2020ptbxl,goldberger2000physionet} tests the certificate itself on a
clinically meaningful acquisition order. A cardiologist-assigned myocardial-infarction (MI) label is predicted at three nested stages (lead~I; leads~I and~II; the eight independent signals of a 12-lead recording) at cumulative ordinal costs $0.1$, $1.0$, $5.0$. The leads are
recorded together and withheld retrospectively, so acquisition is simulated rather
than prospective, and routing is a fixed confidence cascade over stage-specific
models, not the coalition loop of \cref{sec:method}.
The protocol was fixed before folds~9 and~10 were opened, so this is a development result and external confirmation remains necessary. Patient-disjoint official folds separate fitting, threshold selection, certification
(fold~9, $n{=}1942$) and one held-out test (fold~10, $n{=}1904$).

Nine policy--pattern hypotheses, three terminals crossed with an adaptive family and two fixed-stage controls, are certified together by Holm, a step-down multiple-testing correction, at $\delta{=}0.1$ against an $\alpha{=}0.1$ cap. On the held-out fold the certified policy answers $71.2\%$ of patients ($1355/1904$,
$95\%$ CI $[0.691,0.732]$) at $7.4\%$ disagreement ($[0.060,0.089]$), at mean prespecified ordinal cost $2.442$ of $5.0$ ($48.8\%$), a stipulated cost scale rather than measured clinical burden. All three terminals certify and no registered fixed-stage policy dominates it on both cost and answered fraction (\cref{fig:ptbxlflow}, audited in \cref{fig:ptbxl}).

The aggregate hides a class asymmetry that matters clinically: the policy answers $76.7\%$ of non-MI patients and is correct on $98.5\%$ of those it answers, but answers only $53.5\%$ of MI patients and is correct on $65.8\%$ of those, so roughly a third of answered MI cases are wrong. The guarantee is terminal-pattern conditional, not disease-class conditional: this demonstrates workflow certification, not diagnostic readiness.

\begin{figure*}[t]
\centering
\includegraphics[width=0.82\textwidth]{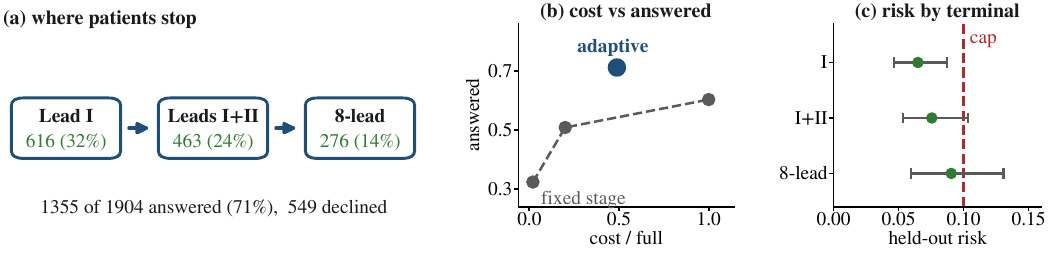}
\caption{\textbf{Selective acquisition on held-out PTB-XL patients.}
(a)~Every patient starts at lead~I; those the certificate can answer stop there and
the rest continue, with patients it still cannot guarantee declined. (b)~The adaptive
policy answers more than any fixed stage at or below its cost, so no registered
fixed-stage control dominates it. (c)~Held-out point estimates at each terminal lie
below the $\alpha{=}0.1$ cap, shown with descriptive intervals.}
\label{fig:ptbxlflow}
\end{figure*}

\subsection{Per-pattern versus global recalibration}\label{ssec:safety}\label{ssec:reliability}
The alternative to conditioning on the policy-created pattern is not to condition at
all, recalibrating globally on post-acquisition scores as BCEA does after a fixed
policy \citep{xu2026bcea}. Our \textbf{global recalibration} arm applies the same
RouteCert-Risk rule marginally over pooled patterns, so the arms differ only in what they
condition on.  A global threshold's realized risk is the answer-weighted average of the per-pattern risks, so it is a property of the traffic mixture, which adaptive acquisition itself produces. Conditioning removes that dependence.

Conditioning controls the quantity a pooled threshold cannot, and
\cref{tab:pervsglobal} reports the cost. At equal per-pattern $\delta$ it is close to
free; held to the same family-wise budget it is not, and on MHEALTH, whose per-pattern
pools are the thinnest, almost nothing remains certifiable.  Pooling mixes easy patterns with hard ones, so global recalibration exceeds the
$0.10$ cap on two of three benchmarks, while per-pattern calibration holds every
answered pattern at target, each at level $\delta$, and abstains on the hardest, so patterns that can be guaranteed answer freely instead of being held to a cutoff set by the hardest pattern in the pool (\cref{tab:answerrate}).

\begin{table}[t]
\centering\small
\setlength{\tabcolsep}{2.9pt}
\begin{tabular}{@{}lcccccc@{}}
\toprule
& \multicolumn{2}{c}{\textbf{pooled}} & \multicolumn{2}{c}{\textbf{per-pattern}} &
\multicolumn{2}{c}{\textbf{per-pat.}} \\
& \multicolumn{2}{c}{\textbf{(one test)}} & \multicolumn{2}{c}{\textbf{($\delta$ each)}} &
\multicolumn{2}{c}{\textbf{(simul.)}} \\
\cmidrule(lr){2-3}\cmidrule(lr){4-5}\cmidrule(lr){6-7}
\textbf{Benchmark} & \textbf{ans.} & \textbf{worst} & \textbf{ans.} & \textbf{worst} &
\textbf{ans.} & \textbf{worst} \\
\midrule
CMU-MOSEI & $0.342$ & $0.145$ & $0.350$ & $\mathbf{0.034}$ & $0.305$ & $\mathbf{0.024}$ \\
IEMOCAP   & $0.189$ & $0.064$ & $0.203$ & $\mathbf{0.004}$ & $0.188$ & $\mathbf{0.000}$ \\
MHEALTH   & $0.374$ & $0.123$ & $0.362$ & $\mathbf{0.009}$ & $0.014^{\dagger}$ & $\mathbf{0.002}^{\dagger}$ \\
\bottomrule
\end{tabular}
\caption{\textbf{Answered fraction and worst-pattern risk against a $0.10$ cap.} All
arms share scores, models and calibration rule. The middle columns are not
budget-matched against the pooled one (pooling spends one test at $\delta$,
per-pattern conditioning one per pattern), so they are pointwise, not simultaneous.
The last pair holds all patterns jointly at $90\%$, by Bonferroni with selection
corrected too on CMU-MOSEI and IEMOCAP (Bonf.$^\ast$, \cref{tab:simulfull}).
$^{\dagger}$MHEALTH's simultaneous entries are a separate Bonferroni pass (validation
only, $\delta/7$), not the matched run behind its other columns. Risk is disagreement
with the reference model ($0.50$ CMU-MOSEI, $0.60$ IEMOCAP held-out), so on those two
it bounds decision stability, not correctness.}
\label{tab:pervsglobal}
\end{table}

\paragraph{Controlling every pattern at once.}
Holding all patterns jointly is affordable where pools have room (CMU-MOSEI
$0.350\to0.305$, IEMOCAP $0.203\to0.188$), and \cref{prop:cost} says why the
thinnest pool pays more (MHEALTH $0.362\to0.014$). Two levers spend the budget: correcting the \emph{test},
where step procedures never fall below Bonferroni, and correcting the
\emph{selection} of the cutoff (\cref{tab:simulfull}).

\subsection{Decision stability, and where the coalition gain comes from}\label{ssec:e2e}
A guarantee proved for a fixed pattern is only useful if it survives the system choosing that pattern, so each of six policies runs cold-start at budget two and RouteCert-Risk is applied at the pattern it reached (\cref{tab:coalition}). Every policy's realized and worst-pattern risk stayed at or below target, and the coalition guarantees more cases than the strongest single-source policy among our matched score-level implementations at the same budget. The gain is not an artifact of how a pair is executed, since an open-loop commit control matches it and a replanning two-step planner guarantees less.

\begin{table}[t]
\centering\small
\setlength{\tabcolsep}{3.6pt}
\begin{tabular}{@{}lcccc@{}}
\toprule
& \textbf{best} & \textbf{RouteCert} & & \textbf{worst} \\
\textbf{Benchmark} & \textbf{single} & \textbf{coalition} & \textbf{gain} & \textbf{disagr.} \\
\midrule
CMU-MOSEI & $0.485$ & $\mathbf{0.588}$ & $+0.103$ & $0.054$ \\
IEMOCAP & $0.201$ & $\mathbf{0.324}$ & $+0.123$ & $0.023$ \\
\bottomrule
\end{tabular}
\caption{\textbf{Coalition look-ahead under actual adaptive acquisition.} Guaranteed
answer rate at a matched two-acquisition budget, the gain paired over five training
runs per benchmark rather than over resplits ($95\%$ CI $[0.025,0.181]$ and
$[0.026,0.220]$). Worst-pattern risk holds under the $0.10$ cap throughout. These are
decision-stability numbers, the reference being ${\approx}0.50$ and ${\approx}0.60$
accurate and true-label error among answered cases $0.458$ and $0.378$.}
\label{tab:coalition}
\end{table}

\subsection{Letting acquisition read the calibrated cutoff}\label{ssec:aware}
\Cref{thm:aware} permits the stopping rule to consult the calibrated cutoff when
the certificate is simultaneous over the policy family. The arms share models,
splits and purchase order and differ only in what decides when to stop, threshold-free
routing never reading calibration while the rest route on a cutoff certified by an
empirical-risk rule, a pointwise exact test, Holm, or the sequential rule. All answer only at certified patterns, without altering where the policy stops.

Simultaneous certification removes the cap exceedances pointwise selection produces (\cref{tab:breach}); certified sequentially, a per-pattern cutoff family recovers $77$--$95\%$ of the gap to threshold-free routing and lifts MHEALTH from $0.033$ to $0.363$. Every simultaneously certified arm then holds the cap on all $50$ resplits (\cref{tab:aware}).

\paragraph{Score quality with the route held
fixed.}\label{ssec:robust}\label{ssec:dominance}\label{ssec:coalitions}\label{ssec:shift}
The certificate is modular: applied to EDDI \citep{ma2019eddi}, a DyMo-style prototype reward \citep{du2026dymo}, expected utility and information gain, it caps every route at $0.034$ or below where a global threshold exceeds the cap on all (\cref{tab:confbase}).

\section{Conclusion and Limitations}
\label{sec:scope}\label{sec:conclusion}
RouteCert certifies adaptive modality acquisition when the policy determines its
terminal group, either by keeping routing calibration-blind or by jointly certifying
fixed policy--pattern pairs. This yields pattern-wise guarantees but trades conditional resolution against
calibration size. Our evidence is limited to replayable masking and a retrospective,
single-site, class-asymmetric ECG simulation (\cref{app:ptbxl}), so naturally missing
data and externally validated clinical deployment remain future work.

\paragraph{Generative-AI disclosure.} A generative-AI assistant was used to polish
and organize the manuscript and the code. The author takes full responsibility for all
content.

\clearpage
\appendix

\setcounter{section}{0}\setcounter{table}{0}\setcounter{figure}{0}
\setcounter{algorithm}{0}\setcounter{equation}{0}
\setcounter{theorem}{0}\setcounter{proposition}{0}\setcounter{corollary}{0}
\setcounter{assumption}{0}\setcounter{definition}{0}\setcounter{remark}{0}
\renewcommand{\thesection}{S\arabic{section}}
\renewcommand{\thetable}{S\arabic{table}}
\renewcommand{\thefigure}{S\arabic{figure}}
\renewcommand{\thealgorithm}{S\arabic{algorithm}}
\renewcommand{\theequation}{S\arabic{equation}}
\renewcommand{\theproposition}{S\arabic{proposition}}
\renewcommand{\thetheorem}{S\arabic{theorem}}
\renewcommand{\thecorollary}{S\arabic{corollary}}
\renewcommand{\theassumption}{S\arabic{assumption}}
\renewcommand{\thedefinition}{S\arabic{definition}}
\renewcommand{\theremark}{S\arabic{remark}}

\section*{Appendix}
\noindent This appendix contains the benchmark suite and provenance, the assumptions
and complete proofs, the deployable acquisition-scoring algorithm, every secondary and
control experiment, and the reproducibility configuration. Its own results, sections,
tables, figures and equations carry an ``S'' prefix, so no number is shared with the
main text.

\paragraph{Computing infrastructure.} All experiments ran on a Slurm cluster; each job used one node with two Intel Xeon Gold 6226 processors (Cascade Lake, $24$ cores at $2.70$\,GHz), $4$ allocated CPU cores, $32$\,GB of host RAM and one NVIDIA V100 (16\,GB), under Red Hat Enterprise Linux~9. The PTB-XL study ran in the site PyTorch~25.10 container (Python~3.12, PyTorch~2.9); all other experiments used Python~3.10 with PyTorch~2.1.0. Verification of the shipped artifacts needs no GPU and under $2$\,GB of RAM.

\paragraph{Number of runs behind each reported result.} Every number in the paper comes from one of four run counts, stated here once so each result is unambiguous. Main-paper Table~1 and the per-pattern risk-control tables use one trained model per benchmark evaluated over $50$ calibration/evaluation resplits. The end-to-end acquisition tables and the coalition comparisons use five independently retrained seeds, each itself averaged over $50$ resplits. The race, pair-race, $K$-sensitivity and coarsening studies use three retrained seeds. The PTB-XL clinical study is a single prespecified run on one held-out fold (fold~10, $n{=}1904$) with three checkpoint seeds fixed before the fold was opened; its confidence intervals are exact binomial, not across-run. Where a table reports a standard deviation it is across retrained seeds, never across resplits.

\section{Full benchmark suite}\label{app:suite}
\Cref{tab:suite} summarizes the suite. This work uses four real datasets. Three are multimodal benchmarks evaluated under controlled masking, CMU-MOSEI \citep{zadeh2018mosei}, IEMOCAP \citep{busso2008iemocap} and MHEALTH \citep{banos2014mhealth}, the last evaluated in two configurations (MHEALTH, three sensor packs, for the risk-control results; MHEALTH-5, five per-sensor streams, for the coalition results). The fourth is PTB-XL, the staged clinical electrocardiogram protocol of \cref{app:ptbxl}, which is a headline benchmark rather than a supporting one. Together they span language, audio, visual, physiological-sensor and clinical-electrocardiogram sources. Earlier drafts also reported five two-source datasets and a synthetic redundancy control; we have removed them, since with a single source missing every acquisition rule coincides and they therefore test nothing about acquisition.

\begin{table}[htbp]
\centering\small
\begin{tabular}{@{}llcc@{}}
\toprule
\textbf{Benchmark} & \textbf{Acquirable sources} & $M$ & $C$ \\
\midrule
CMU-MOSEI & text, audio, visual & 3 & 7 \\
IEMOCAP & text, audio, visual & 3 & 4 \\
MHEALTH & three sensor packs & 3 & 12 \\
\bottomrule
\end{tabular}
\caption{The three controlled-masking benchmarks, their acquirable sources, source count $M$, and class count $C$. All $M$ sources are acquirable from a cold start. PTB-XL, the fourth dataset, uses the separate staged electrocardiogram protocol of \cref{app:ptbxl}.}
\label{tab:suite}
\end{table}

\section{Completion-model robustness and the true-label screen}\label{app:methoddetails}
Under \cref{asm:polcal}, the conformal coverage of \Cset\ (\cref{thm:coverage}) is distribution-free whatever the completion model's quality, since calibration uses the same model. What depends on how well \gen\ covers plausible completions is only the semantic reading of \Lset\ as ``answers genuinely in play'', which remains a heuristic. \paragraph{Crudeness of the residual-set screen.} The size $|\Lset|$ counts
distinct argmax classes across the $K$ completions and discards their probability
mass, so a rare draw can inflate it and its expected value grows with $K$ (mean
$|\Lset|$ rises $1.96{\to}3.46$ on CMU-MOSEI from $K{=}5$ to $50$). This is why the
guarantee never rests on $|\Lset|$: per-pattern coverage and selective-risk validity
are distribution-free in \gen\ (\cref{thm:coverage,prop:rcps}) and reported
separately from the $|\Lset|$ screen, which only routes acquisition. A
probability-weighted residual measure would be less sensitive to rare draws and is
left to future work. Finite $K$ can also miss a decision-changing completion, which
is quantifiable rather than merely a caveat.

\begin{proposition}[True-label guarantee via the reference gap]\label{prop:truelabel}
Let $\varepsilon_{\mathrm{ref}}(\Sset)=\Pr[\,y^{\mathrm{full}}\neq y\mid\textsc{answer},\Sset\,]$ be the reference model's true-label error rate on the answered region of pattern $\Sset$. Whenever the reference-relative selective risk is controlled at level $r_{\mathrm{o}}$ with probability at least $1-\delta$ (the validated $\alpha$ of \cref{prop:rcps}; for the population bound of \cref{thm:selrisk} take $\delta{=}0$), the true-label selective risk satisfies
\[
  \Pr[\,\yhat\neq y\mid\textsc{answer},\Sset\,]\;\le\;r_{\mathrm{o}}+\varepsilon_{\mathrm{ref}}(\Sset).
\]
A point estimate of $\varepsilon_{\mathrm{ref}}(\Sset)$ alone does not guarantee this bound. Given an independent labeled audit of $m$ i.i.d.\ answered cases drawn under the deployed policy in $\Sset$, let $\bar\varepsilon_{\mathrm{ref}}(\Sset)$ be the one-sided Clopper--Pearson upper confidence bound at level $1-\delta'$ on the audit error rate. Then, with probability at least $1-\delta-\delta'$ over the calibration and audit draws,
\[
  \Pr[\,\yhat\neq y\mid\textsc{answer},\Sset\,]\;\le\;r_{\mathrm{o}}+\bar\varepsilon_{\mathrm{ref}}(\Sset),
\]
a genuine high-probability true-label guarantee; the audit point estimate concentrates at rate $O(1/\sqrt m)$, so the slack $\bar\varepsilon_{\mathrm{ref}}(\Sset)-\varepsilon_{\mathrm{ref}}(\Sset)=O\!\bigl(\sqrt{\log(1/\delta')/m}\bigr)$ vanishes with the audit size.
\end{proposition}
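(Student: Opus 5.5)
The argument has three parts: a deterministic inequality on the answered population, a confidence-bound step, and a union bound over the two failure events.

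\emph{Step 1: deterministic bound.} Fix the pattern $\Sset$ and the deployed cutoff. Work conditionally on the event $\{\textsc{answer},\Sset\}$, with probabilities taken over a fresh test case. On that event, $\yhat\neq y$ implies either $\yhat\neq y^{\mathrm{full}}$ or $y^{\mathrm{full}}\neq y$. Indeed, if $\yhat=y^{\mathrm{full}}$ and $y^{\mathrm{full}}=y$, then $\yhat=y$. A union bound therefore gives
\[
\Pr[\yhat\neq y\mid\textsc{answer},\Sset]\le\Pr[\yhat\neq y^{\mathrm{full}}\mid\textsc{answer},\Sset]+\varepsilon_{\mathrm{ref}}(\Sset).
\]
The first term on the right is exactly the reference-relative selective risk $R_\Sset$. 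On the event where it is controlled, which has probability at least $1-\delta$ by \cref{prop:rcps}, it is at most $r_{\mathrm{o}}$. This proves the first display.

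Two points need care here. First, $\varepsilon_{\mathrm{ref}}(\Sset)$ must be defined on the same answered region as $R_\Sset$, namely under the same deployed cutoff and the same policy. Second, both quantities are conditional on the calibration sample through the cutoff. I will state Step 1 as holding for every realized cutoff, so it is purely deterministic.

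\emph{Step 2: why a point estimate is insufficient.} I only need a brief remark. The empirical audit error can fall below $\varepsilon_{\mathrm{ref}}(\Sset)$ with non-negligible probability, so substituting it gives no coverage guarantee.

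\emph{Step 3: the audit bound.} Condition on the calibration sample, so the deployed cutoff and hence the answered region are fixed. The audit cases are drawn independently, i.i.d.\ from the answered region under the deployed policy. The audit error count is then $\mathrm{Bin}(m,\varepsilon_{\mathrm{ref}}(\Sset))$. By the exactness of the one-sided Clopper--Pearson bound,
\[
\Pr\bigl[\bar\varepsilon_{\mathrm{ref}}(\Sset)<\varepsilon_{\mathrm{ref}}(\Sset)\,\big|\,\text{calibration}\bigr]\le\delta'.
\]
Integrating over calibration preserves this bound unconditionally. A union bound with the calibration failure event of probability at most $\delta$ leaves an event of probability at least $1-\delta-\delta'$. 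On that event, Step 1 combined with $\varepsilon_{\mathrm{ref}}\le\bar\varepsilon_{\mathrm{ref}}$ gives the second display.

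\emph{Step 4: the slack rate.} Compare the Clopper--Pearson bound with the Hoeffding upper bound, $\hat\varepsilon+\sqrt{\ln(1/\delta')/(2m)}$. Clopper--Pearson is no larger, since it is the tightest exact bound; alternatively, use a standard Bernstein comparison. Hoeffding's inequality also gives $\hat\varepsilon-\varepsilon_{\mathrm{ref}}=O_P(1/\sqrt m)$. Together these yield the stated $O(\sqrt{\log(1/\delta')/m})$ slack with high probability.

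\emph{Main obstacle.} The main obstacle is the conditioning in Step 3. The answered region depends on the calibration sample, so the audit must be drawn after the cutoff is fixed and independently of calibration. Only then is the binomial model valid conditionally, which is what justifies the simple union of the $\delta$ and $\delta'$ events. I will make that ordering explicit. I will also note that for the population version in \cref{thm:selrisk}, setting $\delta=0$ reduces the claim to the audit event alone.
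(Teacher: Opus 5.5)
Your proposal is correct and follows the paper's proof essentially step for step. The paper uses the same four ingredients: the inclusion $\{\yhat\neq y\}\subseteq\{\yhat\neq y^{\mathrm{full}}\}\cup\{y^{\mathrm{full}}\neq y\}$ on the answered event with subadditivity, the $1-\delta$ calibration event from \cref{prop:rcps}, Clopper--Pearson validity for the audit, and a union bound over the $\delta$ and $\delta'$ failure events. Your explicit conditioning on the calibration sample, which fixes the answered region and hence $\varepsilon_{\mathrm{ref}}(\Sset)$ before the audit is drawn, makes precise a step the paper only asserts. So does your Hoeffding comparison for the slack rate, though it is better justified directly than by calling Clopper--Pearson ``the tightest exact bound'': Hoeffding shows the binomial lower tail at $\hat\varepsilon+\sqrt{\ln(1/\delta')/(2m)}$ is at most $\delta'$, and monotonicity of that tail in $p$ then places the Clopper--Pearson bound below this point.
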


\begin{proof}
On the answer event in $\Sset$, $\{\yhat\neq y\}\subseteq\{\yhat\neq y^{\mathrm{full}}\}\cup\{y^{\mathrm{full}}\neq y\}$, since if $\yhat=y^{\mathrm{full}}$ and $y^{\mathrm{full}}=y$ then $\yhat=y$. Taking probabilities conditional on $\{\textsc{answer},\Sset\}$ and subadditivity gives $\Pr[\yhat\neq y\mid\textsc{answer},\Sset]\le\Pr[\yhat\neq y^{\mathrm{full}}\mid\textsc{answer},\Sset]+\varepsilon_{\mathrm{ref}}(\Sset)\le r_{\mathrm{o}}+\varepsilon_{\mathrm{ref}}(\Sset)$, on the calibration event where \cref{prop:rcps} holds (probability $\ge1-\delta$; probability $1$ for the population bound of \cref{thm:selrisk}). The audit errors are i.i.d.\ Bernoulli$(\varepsilon_{\mathrm{ref}}(\Sset))$, so the one-sided Clopper--Pearson bound satisfies $\varepsilon_{\mathrm{ref}}(\Sset)\le\bar\varepsilon_{\mathrm{ref}}(\Sset)$ with probability $\ge1-\delta'$, and its exact binomial tail gives the stated $O(\sqrt{\log(1/\delta')/m})$ slack. A union bound over the two failure events (each at most $\delta$ and $\delta'$) yields $r_{\mathrm{o}}+\bar\varepsilon_{\mathrm{ref}}(\Sset)$ as an upper bound with probability at least $1-\delta-\delta'$.
\end{proof}

\noindent
This makes the reference-to-correctness gap a measured quantity: on MHEALTH-5
($\rcls$ accuracy $0.97$) the answered-region
$\varepsilon_{\mathrm{ref}}\approx0.05$, so the $\alpha$-guaranteed reference risk
already implies a true-label cap near $\alpha+0.05$; on CMU-MOSEI ($0.51$)
$\varepsilon_{\mathrm{ref}}\approx0.44$, so the guarantee reports decision stability,
not correctness, and the direct true-label layer (\cref{app:ablation}) is the route
to a correctness guarantee there.

\Cref{prop:truelabel} also makes the choice of where to answer a
guaranteed one, converting the reference-relative default into a tunable true-label
deployment.

\noindent
Here $\gamma$ names the correctness the practitioner requires, and RouteCert answers on
exactly the patterns it can guarantee at that level. Where the reference is strong
(MHEALTH-5, $\bar\varepsilon_{\mathrm{ref}}\approx0.05$) every pattern clears
$\gamma\approx\alpha+0.05$, so the audited true-label deployment is essentially the
full system; where it is weak (CMU-MOSEI,
$\bar\varepsilon_{\mathrm{ref}}\approx0.44$) only a loose $\gamma$ is attainable, so
the rule declines rather than emit an answer it cannot guarantee.

\noindent
The bound is immediate from independence of the $K$ draws. It gives the deployed
$K$ an operational reading: at $K{=}25$ and $C{=}7$, every completion mode of mass
$\ge0.2$ is recalled with probability $\ge0.97$ (union bound), whereas at $K{=}15$ it
is $\approx0.75$ and at $K{=}50$ it is $\ge0.9999$. This matches the $K$-sweep, where
the acquisition score settles by $K\approx25$, and quantifies one limitation of the residual set, missing a rare but decision-changing
completion, as a decaying function
of $K$ and the mode mass $\rho_{\mathrm{mode}}$; it does not affect the guarantee, which is
distribution-free in $\gen$ at any finite $K$.
A stress test shows both sides: a noisier \gen\ widens \Lset\ and makes RouteCert more cautious, while a mode-collapsed, overconfident \gen\ over-emits with higher risk before calibration. Per-pattern calibration absorbs a stationary completion bias, since the same \gen\ produces the scores at calibration and test, so any systematic distortion folds into the calibrated threshold and coverage holds regardless of \gen's quality; it does not absorb a calibration-to-deployment shift in \gen.

The default guarantee is scored against the full-information reference model, the right target when the operative question is whether the evidence in hand suffices. Some deployments instead need a guarantee against the true label; RouteCert then adds one more conformal screen, calibrated the same per-pattern way but scored on held-out true labels, and commits only when both layers agree (reported separately, \cref{sec:guarantee}). Turning the second on cuts true-label error among answered cases on CMU-MOSEI (reference only $0.51$-accurate) from $0.44$ to $0.07$ (\cref{app:ablation}), at a lower answer rate.

\Cref{tab:truelabel} lists the direct true-label operating points after end-to-end acquisition: the two strong-reference benchmarks admit a usable point, and every weak reference abstains rather than answer without a guarantee.

\section{Assumptions and proofs}\label{app:theory}
\paragraph{The calibration procedure.}
Both Phase-2 layers share the routing and the split but differ in the calibrated
quantity, the conformal threshold $\tau_\Sset$ (RouteCert-Cov, set-valued) against the
validated confidence cutoff $\lambda_\Sset$ (RouteCert-Risk, a scalar rejector), never
interchanged. \textbf{(0)}~Fit the completion model, reference model and policy
$\pi$ on a training split disjoint from calibration. \textbf{(1)}~Route every
calibration point through the Phase-1 loop of \cref{alg:routecertmain} (no calibrated
quantity is read), recording its terminal pattern $\Sset_{\mathrm{fin}}$, and group by
that pattern. \textbf{(2a) RouteCert-Cov:} within each pattern set $\tau_\Sset$ to the
$\lceil(1-\alpha)(n_\Sset+1)\rceil$-th smallest nonconformity score
(\cref{asm:polcal}); at test, answer iff the answer set is a singleton.
\textbf{(2b) RouteCert-Risk:} within each pattern, split its calibration points in half,
choose a confidence cutoff on the selection half by any rule, and deploy it as
$\lambda_\Sset$ iff its exact Clopper--Pearson bound on the validation half is
$\le\alpha$, else abstain (\cref{prop:rcps}); at test, answer iff the top
completion-averaged confidence clears $\lambda_\Sset$. Every risk-capped result uses (2b) and the
coverage tables (2a), both on the threshold-free routing the proofs require.

\begin{table*}[htbp]
\centering\small
\begin{tabular}{@{}lll@{}}
\toprule
\textbf{Object} & \textbf{Role} & \textbf{Carries a guarantee} \\
\midrule
$\Lset$ (from \gen, \rcls) & routes acquisition (Phase 1) & no; heuristic screen \\
$\phat$, $\yhat$ & prediction and confidence & scored by Phase 2 \\
$\tau_\Sset$ (RouteCert-Cov) & per-pattern conformal set & yes (\cref{thm:coverage}) \\
$\lambda_\Sset$ (RouteCert-Risk) & validated confidence cutoff & yes (\cref{prop:rcps}); default \\
$y^{\mathrm{full}}$ (reference) & default target & decision stability, not correctness \\
$y$ (true label) & correctness target & direct calibration or audit (\cref{prop:truelabel}) \\
\bottomrule
\end{tabular}
\caption{Reader's map: each object's role and whether it carries a guarantee. The
residual set routes but is never guaranteed; the guarantees attach to the Phase-2
layers at the terminal pattern.}
\label{tab:objects}
\end{table*}

\paragraph{Augmented exchangeability (making ``by construction'' precise).}
Write each point as an augmented pair $(x,\omega)$, where $\omega$ collects the
auxiliary randomness of its $K$ completion draws. The Phase-1 map reads only
$(x,\omega)$, the costs, and the budget, never a calibration label or a
threshold, so $\Sset_{\mathrm{fin}}=\pi(x,\omega)$ is a fixed measurable function
of the augmented pair. The completion model, reference model, and $\pi$ are fit on data
disjoint from calibration, so they act as constants with respect to the
calibration and test draw. Assume the augmented pairs
$(x_1,\omega_1),\dots,(x_{n},\omega_{n}),(x_{\ast},\omega_{\ast})$ are i.i.d. The
points with $\pi(\cdot)=\Sset$ are then i.i.d.\ draws from the conditional law
given $\{\pi=\Sset\}$, hence exchangeable: selecting on the value of a fixed
function of each pair preserves exchangeability within the selected group.
Because each point's group label depends only on its own augmented pair,
conditioning the i.i.d.\ sequence on the per-point event $\{\pi=\Sset\}$ leaves the
selected points i.i.d.; the within-point coupling between a point's score and its
routing therefore does not affect the within-group rank uniformity. The
nonconformity scores inside $\Sset$ are therefore exchangeable and the rank of a
test score routed to $\Sset$ among the $n_\Sset$ calibration scores there is
uniform. This is all the coverage step uses; nothing is assumed beyond i.i.d.\
sampling and a threshold-free, disjoint-fit routing map. This is the precise
content of ``by construction'': the randomized routing is absorbed into the
augmented pair, so no completion-kernel or seed-matching condition beyond i.i.d.\
sampling is needed. In particular, finite $K$ degrades only the deployed policy's
decisions, not the guarantee: the same finite-$K$ loop routes calibration and test
alike, so \cref{thm:coverage,thm:selrisk} hold for the deployed finite-$K$ system,
not merely in a $K\to\infty$ limit.

\paragraph{Coverage (\cref{thm:coverage}).}
Fix a terminal pattern $\Sset$ and condition on the threshold-free event
$\{\Sset_{\mathrm{fin}}=\Sset\}$. By the augmented-exchangeability argument above,
the routing map $\pi(x,\omega)$ reads only the augmented pair, the costs, and the
budget, never a calibration label or the threshold $\tau_\Sset$, and the
completion model, reference model, and $\pi$ are fit on data disjoint from calibration.
Hence the $n_\Sset$ calibration points routed to $\Sset$ and the test point, when
it is also routed to $\Sset$, are i.i.d.\ draws from the same conditional law
given $\{\pi=\Sset\}$, and their nonconformity scores are exchangeable. The
per-pattern calibration count is itself random, so we make the conditioning
explicit: conditional on the test point routing to $\Sset$ and on the
count $N_\Sset=n_\Sset$, the $n_\Sset$ calibration scores in $\Sset$ and the test
score are exchangeable draws from the conditional score law given
$\{\pi=\Sset\}$ (routing is a fixed per-point function, so conditioning on the
per-point routing events and their count preserves exchangeability); the rank
argument below holds for every value of $n_\Sset$
(including $n_\Sset=0$, where $\tau_\Sset=+\infty$ and coverage is trivial), and
the unconditional statement follows by integrating over $N_\Sset$ (tower
property). Let
$s_1,\dots,s_{n_\Sset}$ be the calibration scores and $s_\ast$ the test score
routed to $\Sset$. By exchangeability the rank of $s_\ast$ among the $n_\Sset+1$
scores is uniform on $\{1,\dots,n_\Sset+1\}$ (with continuous scores ties occur
with probability zero; when scores can tie, the conservative quantile with the
$s_\ast\le\tau_\Sset$ acceptance keeps the rank super-uniform, so the $1-\alpha$
bound holds with no tie-breaking randomization). The threshold $\tau_\Sset$ is the
$\lceil(1-\alpha)(n_\Sset+1)\rceil$-th order statistic of $s_1,\dots,s_{n_\Sset}$ (or
$+\infty$ when this index exceeds $n_\Sset$, so $\Cset$ contains all labels and
coverage is trivial), so
$s_\ast\le\tau_\Sset$ whenever the rank of $s_\ast$ is at most
$\lceil(1-\alpha)(n_\Sset+1)\rceil$, an event of probability
$\lceil(1-\alpha)(n_\Sset+1)\rceil/(n_\Sset+1)\ge1-\alpha$ (by the rank
uniformity above). Since the calibration
target $t$ ($y^{\mathrm{full}}$ for the default layer, $y$ for the optional
screen) lands in $\Cset$ exactly when $s_\ast\le\tau_\Sset$ (by the definition
of the conformal set $\Cset$), we conclude
$\Pr[\,t\in\Cset\mid\Sset\,]\ge1-\alpha$. This is standard Mondrian split
conformal with the terminal pattern as the group.

The main text's coverage theorem implies a population selective-risk bound of the
divided form, which we state and prove here; it is not deployed, and every
headline risk number comes from \cref{prop:rcps}.

\begin{corollary}[Selective risk under adaptive acquisition]\label{thm:selrisk}
Under \cref{asm:polcal}, for every final missing-source pattern
$\Sset=\Sset_{\mathrm{fin}}$ with $\Pr[\textsc{answer}\mid\Sset]>0$, when the
RouteCert-Cov layer commits to a single answer $\yhat$ (the \textsc{answer} event is
$\{|\Cset|{=}1\}$; an empty or larger answer set declines; patterns with
$\Pr[\textsc{answer}\mid\Sset]=0$ abstain identically),
\[
  \Pr\!\bigl[\,\yhat\neq y^{\mathrm{full}}\,\big|\,\textsc{answer},\,\Sset\,\bigr]
  \;\le\;\min\!\Bigl\{1,\ \frac{\alpha}{\Pr[\textsc{answer}\mid\Sset]}\Bigr\} ,
\]
the denominator being that pattern's own answer rate. Calibrating at the tighter level $\alpha'=\alpha\amin$ keeps answered-case risk at most $\alpha$ whenever every pattern answers with probability at least $\amin$.
\end{corollary}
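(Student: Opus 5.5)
The plan is to derive this from \cref{thm:coverage} by an elementary conditional-probability inequality, with no new exchangeability argument. Fix a terminal pattern $\Sset$ with $\Pr[\textsc{answer}\mid\Sset]>0$. All probabilities are joint over the calibration draw and the test case, exactly as in \cref{thm:coverage}, with target $t=y^{\mathrm{full}}$.

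The first step is an event inclusion. On $\{\textsc{answer}\}=\{|\Cset_\Sset|=1\}$ the set is $\{\yhat\}$. So $\yhat\neq y^{\mathrm{full}}$ forces $y^{\mathrm{full}}\notin\Cset_\Sset$, which gives
\[
\{\yhat\neq y^{\mathrm{full}}\}\cap\{\textsc{answer}\}\subseteq\{y^{\mathrm{full}}\notin\Cset_\Sset\}.
\]
One subtlety needs checking here. The committed answer must be the singleton's element. It should not be an unrelated argmax of $\phat$, because the RouteCert-Cov layer answers with the set's unique member. If the singleton label could differ from $\argmax_y\phat_y$, the inclusion is still fine as long as $\yhat$ is defined as that member. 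I will state this convention explicitly.

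The second step takes probabilities conditional on $\Sset$. By \cref{thm:coverage},
\[
\Pr[\yhat\neq y^{\mathrm{full}},\textsc{answer}\mid\Sset]\le\Pr[y^{\mathrm{full}}\notin\Cset_\Sset\mid\Sset]\le\alpha.
\]
Dividing by $\Pr[\textsc{answer}\mid\Sset]>0$, i.e.\ using $\Pr[A\mid \textsc{answer},\Sset]=\Pr[A,\textsc{answer}\mid\Sset]/\Pr[\textsc{answer}\mid\Sset]$, gives $\alpha/\Pr[\textsc{answer}\mid\Sset]$. Taking the minimum with $1$ is trivial because the left side is a probability.

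The expected obstacle is being careful that the \textsc{answer} event depends on the calibration sample through $\tau_\Sset$. The selective risk is therefore a joint probability, not conditional on calibration, and the answer rate in the denominator must be understood the same way. The division step is valid for joint probabilities, so this is bookkeeping rather than a gap. I will flag that the bound is marginal over calibration, not a high-probability statement as in \cref{prop:rcps}.

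For the final sentence, rerun the argument with $\alpha'=\alpha\amin$ in place of $\alpha$. Note that \cref{thm:coverage} holds at any level, and the answer rate is now the one induced at level $\alpha'$. If that rate is at least $\amin$, the bound becomes $\alpha\amin/\Pr[\textsc{answer}\mid\Sset]\le\alpha$.
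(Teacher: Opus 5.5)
Your proposal is correct and follows the paper's proof. Both arguments show that the error-and-answer event lies inside the miscoverage event, apply \cref{thm:coverage} with $t=y^{\mathrm{full}}$ to the joint probability, divide by the pattern's answer rate, and cap at $1$. Your worry about the singleton element versus $\argmax_y\phat_y$ does not arise here: $\Cset_\Sset=\{y:\phat_y(x_\Sset)\ge 1-\tau_\Sset\}$ is an upper level set of $\phat$, so a singleton is necessarily its strict argmax.
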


\paragraph{Proof of \cref{thm:selrisk}.}
Write $E=\{\yhat\neq y^{\mathrm{full}}\}$ for the error event and
$A=\{\textsc{answer}\}=\{|\Cset|{=}1\}$ for the answer event. Fix $\Sset$.
The event $A$ does depend on $\tau_\Sset$; no independence is claimed for it.
Instead we bound the joint and divide (definition of conditional probability):
\[
  \Pr[E\mid A,\Sset]
  =\frac{\Pr[E,A\mid\Sset]}{\Pr[A\mid\Sset]}
  \;\le\;\frac{\alpha}{\Pr[A\mid\Sset]} .
\]
The numerator is $\le\alpha$ for the following reason. The system answers only
when $|\Cset|{=}1$, and it answers with the unique element $\yhat$ of $\Cset$.
Therefore the joint event $E\cap A$
implies $y^{\mathrm{full}}\neq\yhat$ while $\Cset=\{\yhat\}$, that is
$y^{\mathrm{full}}\notin\Cset$, a miscoverage event. By the coverage step applied
with target $t=y^{\mathrm{full}}$, $\Pr[\,y^{\mathrm{full}}\notin\Cset\mid\Sset\,]\le\alpha$,
so $\Pr[E,A\mid\Sset]\le\alpha$ (monotonicity of probability under the event
inclusion).
Dividing by $\Pr[A\mid\Sset]$ gives $\alpha/\Pr[A\mid\Sset]$; since the left side is a probability it is also at most $1$, which yields the stated $\min\{1,\cdot\}$ form. The
conditioning event $\Sset$ is legitimate precisely because routing is
threshold-free; the threshold-dependent answer choice only ever appears
inside the joint probability, never as a conditioning event, so the
coverage step still applies under the conditioning on $\Sset$. For the population
form, the per-pattern joint bound $\Pr[E,A\mid\Sset]\le\alpha$ aggregates (law
of total probability over the patterns) as
\[
  \Pr[E,A]=\sum_\Sset\Pr[\Sset]\,\Pr[E,A\mid\Sset]\;\le\;\alpha ,
\]
and dividing by $\Pr[A]$ gives $\Pr[E\mid A]\le\alpha/\Pr[A]$.

\paragraph{A counterexample: reusing the calibration cutoff for routing can break coverage.}
A minimal instance makes the failure exact. Let calibration and test nonconformity
scores be i.i.d.\ $U[0,1]$ with calibration quantile $\tau$, route a test point into
$\Sset$ iff its score exceeds $\tau$, and guarantee it by $s\le\tau$ within $\Sset$.
Conditional on $\Sset$ the score is above $\tau$, so realized coverage is
$\Pr[\,s\le\tau\mid\Sset\,]=0$. The acquisition loop realizes a milder version of the
same coupling. Suppose it kept acquiring until the current pattern's calibrated set
became a singleton, reading $\tau_\Sset$. Whether a test case stays in $\Sset$ then
depends on $\tau_\Sset$, computed from $\Sset$'s own calibration points, so on a draw
where those scores run low the threshold is permissive, extra cases stop in $\Sset$,
and the rank of a test score among calibration scores is no longer uniform. No routing
decision in \cref{alg:routecertmain} reads $\tau$, so the coupling cannot arise there.
We measure it over ten calibration/test resplits (one trained reference and completion
model throughout, $\alpha{=}0.10$), recomputing $\tau_\Sset$ from each split's
calibration half. The threshold-free arm routes by $|\Lset|$ and the confidence-routed
arm stops when that split's own $\tau_\Sset$ makes the set a singleton, so only whether
test-time routing consults the threshold differs. The confidence-routed rule realizes
marginal coverage $0.888\pm0.007$, below the $0.90$ target, against $0.901\pm0.007$
threshold-free. The effect is modest but consistent and in the predicted direction, and
its scope is exactly this reuse, since a confidence policy frozen on an independent
split avoids the coupling and stays valid (\cref{sec:guarantee}).

To show the gap is not intrinsically small, we repeat the test on controlled
synthetic data where the rate $\rhocpl$ of confidently-wrong-under-missingness
cases (the $K$-averaged posterior peaked on a non-target class while $|\Lset|>1$,
the exact configuration the coupling exploits) is a free knob
(\cref{tab:synthce}, same per-pattern thresholds, $\alpha{=}0.10$, $40{\times}3$
resplits). Threshold-free routing holds the $0.90$ target at every $\rhocpl$, overall
and on every fixed pattern's aggregated coverage ($0.899$--$0.901$). Confidence-routed stopping degrades smoothly
as the coupling strengthens: overall coverage falls from $0.84$ at $\rhocpl{=}0.1$ to
$0.45$ at $\rhocpl{=}0.5$, and its worst-pattern coverage falls to $0.01$--$0.15$, far below target. At $\rhocpl{=}0$ it does not under-cover, which confirms the breach
is caused by the coupling the proof identifies and not by the stop rule alone.
CMU-MOSEI sits near the small-$\rhocpl$ end, which is why its real-data gap is modest;
the mechanism itself is not bounded.

\subsection{Proofs of the four structural results}\label{app:newproofs}

\paragraph{Sharpness of the routing condition (\cref{rem:sharp}).}
Let calibration scores $s_1,\dots,s_n$ and a test score $s_\ast$ be i.i.d.\
$U[0,1]$, and let $\tau$ be the $\lceil(1-\alpha)(n+1)\rceil$-th smallest
calibration score. Route the test point into group $\Sset$ exactly when
$s_\ast>\tau$, and inside $\Sset$ emit the answer set
$\Cset=\{y:s(y)\le\tau\}$. Conditional on $\{\Sset_{\mathrm{fin}}=\Sset\}$ we
have $s_\ast>\tau$ by definition of the routing rule, so the target is excluded
with probability one and
$\Pr[\,t\in\Cset\mid \Sset_{\mathrm{fin}}=\Sset\,]=0$. Every other hypothesis of
\cref{thm:coverage} holds: the points are i.i.d., the models are fit on disjoint
data, the score is a fixed measurable function, and the routing is a deterministic measurable map of the input \emph{and} the calibration set.
Since coverage is $0$ rather than merely below $1-\alpha$, the example rules out
any nontrivial distribution-free guarantee under \emph{unrestricted}
calibration-dependent routing: no bound survives if the dependence is left
unconstrained. It does \emph{not} rule out approximate validity under an
explicit quantitative restriction on that dependence, such as a bound on mutual information, a coupling or total-variation budget, algorithmic stability, or
differential privacy could each buy a degraded but nonvacuous guarantee, and
\citet{barber2023beyond} is exactly such a route, bounding the loss by a
computable coupling term that our construction drives to its maximum. The
correct reading is therefore that arbitrary calibration dependence admits no
distribution-free guarantee, so approximate validity requires the dependence to
be quantified; \cref{thm:aware} takes the different route of removing the
dependence from the certified object instead of bounding it. Two further remarks
delimit the claim. First, it concerns worst cases over data laws, as
distribution-free statements always do; a particular calibration-dependent
policy may be harmless on a particular law. Second, it does not say routing must ignore \emph{all} data; \cref{thm:aware} identifies which data it must ignore. The measured decay between the two regimes
is \cref{tab:synthce}.

\begin{table}[htbp]
\centering\small
\begin{tabular}{@{}lcccc@{}}
\toprule
& \multicolumn{2}{c}{threshold-free (RouteCert)} & \multicolumn{2}{c}{confidence-routed} \\
\cmidrule(lr){2-3}\cmidrule(lr){4-5}
$\rhocpl$ & overall & worst & overall & worst \\
\midrule
$0.0$ & $0.90$ & $0.90$ & $1.00$ & $1.00$ \\
$0.1$ & $0.90$ & $0.90$ & $0.84$ & $0.10$ \\
$0.2$ & $0.90$ & $0.90$ & $0.74$ & $0.15$ \\
$0.3$ & $0.90$ & $0.90$ & $0.64$ & $0.01$ \\
$0.5$ & $0.90$ & $0.90$ & $0.45$ & $0.02$ \\
\bottomrule
\end{tabular}
\caption{Synthetic counterexample (target $0.90$, $120$ resplits): threshold-free
versus confidence-routed stopping as the rate $\rhocpl$ of
confidently-wrong-under-missingness cases grows; same thresholds, only the stop
rule differs. ``worst'': minimum over patterns of fixed-pattern coverage across
resplits, the quantity \cref{thm:coverage} controls. Threshold-free holds the
target at every $\rhocpl$ ($0.899$--$0.901$); confidence-routed
coverage declines with $\rhocpl$.}
\label{tab:synthce}
\end{table}

\paragraph{Sample splitting, and why it is not enough (\cref{sec:guarantee}).}
Let $D_A$ and $D_B$ be disjoint and let $\pi=\pi(\cdot\,;D_A)$ be any measurable
function of $D_A$. Condition on $D_A$: then $\pi$ is a fixed measurable map from
the augmented pair $(x,\omega)$ to a terminal pattern, exactly the object
\cref{asm:polcal} requires, and $D_B$ is independent of $D_A$ and identically
distributed with the test point. So \cref{asm:polcal} holds verbatim with $D_B$
as the calibration set, \cref{thm:coverage,prop:rcps} give their conclusions
conditionally on $D_A$, and since the bounds do not depend on the value of
$D_A$, averaging preserves them. A $K$-fold version routes each test point by
the policy fitted on $\bigcup_{j\neq k}D_j$ for an independently drawn $k$ and
calibrates on $D_k$. This is standard, and it is what the frozen-confidence arm
does. Its limits are equally standard: the policy is graded on data it never
saw, so it can be learned but it still cannot consult the deployed guarantee,
and every fold spent on routing is a fold not spent on calibration.
\Cref{thm:aware} gives up neither.

\paragraph{Calibration-aware acquisition (\cref{thm:aware}).}
Fix the grid $\Lambda$ and the pattern set $\mathcal{S}$ \emph{before} looking at
the calibration data, and let $d_\lambda$ be the acquisition-and-answer policy
described in the main text. For each $\lambda$, $d_\lambda$ is a deterministic
measurable map from $(x,\omega)$ to a purchase sequence, a terminal pattern, and an
answer-or-decline decision; nothing about it depends on the calibration set.
Consequently $R_\Sset(\lambda)$, the selective risk among the cases $d_\lambda$
answers at terminal pattern $\Sset$, is a fixed population quantity, one per pair
$(\lambda,\Sset)\in\Lambda\times\mathcal{S}$, and the null
$H_{\lambda,\Sset}:R_\Sset(\lambda)>\alpha$ is a fixed hypothesis.

Let $n_{\lambda,\Sset}$ and $k_{\lambda,\Sset}$ be the number of calibration
cases $d_\lambda$ answers at $\Sset$ and how many it gets wrong. Because the
calibration points are i.i.d.\ and $d_\lambda$ does not read them, these errors
are binomial at rate $R_\Sset(\lambda)$ conditional on $n_{\lambda,\Sset}$, so
$p_{\lambda,\Sset}=\Pr[\mathrm{Bin}(n_{\lambda,\Sset},\alpha)\le k_{\lambda,\Sset}]$
is super-uniform under $H_{\lambda,\Sset}$. Apply any procedure controlling the
family-wise error rate at $\delta$ over these hypotheses and let
$\widehat{\mathcal{W}}$ be its rejected set. We use Holm \citep{holm1979},
which is valid under arbitrary dependence. That matters: the independence
established above is \emph{across patterns}, whereas within one
pattern the $p$-values for different $\lambda$ are computed from overlapping
answered sets and are strongly dependent, so a step-up procedure such as
Hochberg is not licensed over $\Lambda\times\mathcal{S}$ and we do not use one
there. By definition of family-wise control,
$\Pr[\widehat{\mathcal{W}}\text{ contains a true null}]\le\delta$, which is
precisely
$\Pr[\,R_\Sset(\lambda)\le\alpha\ \forall(\lambda,\Sset)\in\widehat{\mathcal{W}}\,]\ge1-\delta$.

The deployment step is where the assumption is actually relaxed. Choose
$\hat\lambda$ by \emph{any} rule, including one that reads
$\widehat{\mathcal{W}}$, the estimated risks, the answer rates, or the whole calibration set, and deploy $d_{\hat\lambda}$ with its answers restricted to
the patterns certified for $\hat\lambda$. On the $1-\delta$ event every deployed
pair satisfies $R_\Sset(\lambda)\le\alpha$ simultaneously, so whichever pair the
selection lands on is covered. No independence between the selection and the
calibration set is used, because the guarantee being transferred is uniform over
the family rather than pointwise at one pair. That is exactly what fails in
\cref{rem:sharp}: there the guarantee holds at the realized threshold marginally
over calibration draws, and selecting on that same draw does not preserve it.

\paragraph{The per-pattern policy family, and where it runs out of data.}
The family $\{d_\lambda\}$ of the main text carries one cutoff at every pattern, a
poorer class than the per-pattern cutoffs a calibration-blind baseline fits, and that
class, not the certificate, is what costs answer rate. A richer family is available at
no cost in validity. Split the calibration half into a selection part and a disjoint
certification part. On the selection part
compute, for each mask $\Sset$ and each level $\eta$ in a pre-registered grid
$\mathcal{R}$, the most permissive cutoff whose \emph{empirical} risk at that
mask is at most $\eta$; call the resulting vector $\lambda(\eta)$. Each $d_{\eta}$ buys while confidence at the current pattern is below
$\lambda_\Sset(\eta)$ and answers when it clears, so the family
$\{d_\eta:\eta\in\mathcal{R}\}$ is a deterministic function of the selection part
alone. Conditional on that part it is pre-registered, so \cref{thm:aware} applies
with the certification part as its calibration set, and averaging over the selection
draw preserves the bound. We use $\mathcal{R}=\{0.01,\dots,0.20\}$ in steps of
$0.01$ and Holm over $\mathcal{R}\times\mathcal{S}$.

Empirically this recovers most of what the single-cutoff family gives up:
answered fraction rises from $0.371$ to $0.879$ on CMU-MOSEI, $0.487$ to
$0.780$ on IEMOCAP, closing $85\%$ and $74\%$
and $89\%$ of the distance to calibration-blind routing, with no breach on any
of the $50$ resplits of any benchmark. It does not overtake the blind baseline,
which remains $6$--$11$ points ahead.

On MHEALTH the arm abstains outright, and the arithmetic of \cref{rem:cost}
says why. Splitting the calibration half again leaves a certification quarter;
Holm's leading threshold over $|\mathcal{R}|\times|\mathcal{S}|=20\times8$
hypotheses is $\delta/160=6.25\times10^{-4}$, and clearing it with no observed
errors needs $n\ge\ln(6.25\times10^{-4})/\ln(0.9)\approx70$ answered cases in
whichever pattern is rejected first. MHEALTH's $174$ certification cases spread
across seven initial patterns leave roughly $25$ apiece, so no pair is ever
rejected and no $\eta$ is certified. MHEALTH survives on its larger pools
only because its answers concentrate in fewer patterns. The binding constraint
on calibration-aware routing is thus the size of the certification pool relative
to the family, which is the same floor that governs how fine a calibration
partition may be.

\paragraph{The restriction must suppress answers, not redirect traffic.}
The certified quantity
$R_\Sset(\lambda)$ is the risk among the cases \emph{that $d_\lambda$ itself}
routes to $\Sset$ and answers there. A deployment that, on reaching an
uncertified pattern, keeps buying instead of declining is a different policy,
say $d_{\lambda}'$, and for a later pattern $\Sset_2$ one has in general
\[
  R_{\Sset_2}^{d_\lambda'}(\lambda)\ \neq\ R_{\Sset_2}^{d_\lambda}(\lambda),
\]
so the certificate computed for $(\lambda,\Sset_2)$ under $d_\lambda$ says
nothing about $d_\lambda'$. A two-type example makes the gap concrete. Type~A
cases are confidently wrong at $\Sset_1$, so $d_\lambda$ stops and answers there
and $(\lambda,\Sset_1)$ is correctly refused certification; type~B cases pass
through $\Sset_1$ to $\Sset_2$, where they are always right, so
$(\lambda,\Sset_2)$ certifies on a population of pure~B. Redirect the type~A
cases onward and $\Sset_2$ receives A${}+{}$B at deployment, with risk bounded
only by the type~A error rate. Nothing about the calibration computation
detects this, because the calibration never saw the modified policy.

Declining is safe for the complementary reason: it changes no case's purchase
sequence, so every pattern receives exactly the population it was certified on,
and restricting to a subset of answered pairs can only remove answered cases
from patterns that are no longer used. The alternative, if one genuinely wants
the continue-behavior, is to make it part of the policy: index candidates by
$(\lambda,A)$ where $A$ is a pre-registered answer/continue/decline map, and
control the family-wise error over $\Lambda\times\mathcal{A}\times\mathcal{S}$.
That is valid for any $\mathcal{A}$ fixed in advance, at a multiplicity cost of
$|\mathcal{A}|$; we do not pursue it, and the arm reported in \cref{ssec:aware}
declines.

\paragraph{The deployment floor (\cref{rem:cost}).}
Fix a group and suppose its validation half yields $n$ answered cases with $k$
errors. The deployed rule accepts iff the one-sided Clopper--Pearson upper
confidence bound at level $1-\delta$, the $(1-\delta)$-quantile of
$\mathrm{Beta}(k+1,n-k)$, is at most $\alpha$. That quantile is nondecreasing in
$k$, so it is minimized at $k=0$, where $\mathrm{Beta}(1,n)$ has quantile
function $1-(1-p)^{1/n}$ and the bound equals $1-\delta^{1/n}$. Acceptance
therefore requires $1-\delta^{1/n}\le\alpha$, i.e.\ $\delta^{1/n}\ge1-\alpha$,
i.e.\ $\tfrac1n\ln\delta\ge\ln(1-\alpha)$. Both logarithms are negative, so this
rearranges to $n\ge\ln\delta/\ln(1-\alpha)$. At $\alpha=\delta=0.1$ the
right-hand side is $\ln(0.1)/\ln(0.9)=21.85$, giving $n_{\min}=22$. The bound is
attained: $n=22$ error-free answered cases accept. Nothing about the score,
model, or data law enters, so a group whose answered validation count falls below
$n_{\min}$ cannot deploy however good its predictor is, and a partition whose
cells cannot be expected to clear the floor cannot be used. Selecting the finest
partition that clears it on a held-out selection split is exactly the
data-adaptive backoff of \cref{cor:adaptcoarsen}, which is valid because the
selection part is disjoint from the guarantee part.

\paragraph{Simultaneous control dominating Bonferroni.}
Condition on the routing assignments (which fix every pattern count $n_\Sset$)
and on every pattern's selection half (which fixes every selected cutoff
$\lambda_\Sset$, hence which patterns have true selective risk above $\alpha$,
i.e.\ which nulls are true). Two facts follow. First, each pattern's validation
half is a fresh sample never used to choose its own cutoff, so under its null
the exact one-sided binomial $p$-value
$p_\Sset=\Pr[\mathrm{Bin}(n_\Sset^{\mathrm{ans}},\alpha)\le k_\Sset]$ is
super-uniform, which is what any family-wise procedure requires. Second, the
patterns \emph{partition} the calibration set, so the validation halves are
disjoint subsets of an i.i.d.\ sample and the $\{p_\Sset\}$ are mutually
independent under the conditioning. Holm's step-down procedure controls the
family-wise error rate at $\delta$ for super-uniform $p$-values under arbitrary
dependence, and Hochberg's step-up procedure \citep{hochberg1988} does so under independence; both
therefore apply conditionally, and since the bound $\delta$ does not depend on
the conditioning variables, averaging preserves it.

Domination is immediate \emph{at a matched selection rule}. Bonferroni deploys
exactly the patterns with $p_\Sset\le\delta/|\mathcal{S}|$. If there are $k$
such patterns, the $k$ smallest $p$-values are all at most
$\delta/|\mathcal{S}|$, and Holm's thresholds $\delta/(|\mathcal{S}|-j)$ at
steps $j=0,\dots,k-1$ are each at least $\delta/|\mathcal{S}|$, so Holm passes
through all $k$ steps and deploys at least the same patterns; Hochberg, being a
step-up procedure with the same threshold sequence, deploys a superset of
Holm's. Neither can therefore answer less than Bonferroni, and both can answer
more.

The qualifier matters in practice, because ``Bonferroni'' names two different
procedures here. One corrects only the \emph{test}, comparing $p_\Sset$ to
$\delta/|\mathcal{S}|$ while choosing each candidate cutoff at $\delta$; the
other, which the earlier version of this paper reported, also tightens
\emph{selection}, choosing the candidate cutoff at $\delta/|\mathcal{S}|$ as
well. The second selects more conservative cutoffs that clear their test more
easily, so it is not dominated by the step procedures and can answer more than
they do. \Cref{tab:simulfull} reports both, and the domination above is the
like-for-like comparison against the first.

For contrast we also implement a fixed-sequence arm: order the patterns by a
routing-measurable statistic (decreasing pool size $n_\Sset$), test each at the
\emph{full} level $\delta$, and stop at the first failure. That is also valid, since a family-wise error requires the first bad pattern in the order to pass its own single test, which happens with probability at most $\delta$, and it splits no level at all. It is nonetheless the weakest arm in
\cref{tab:simulfull}, because with patterns of comparable size the ordering carries
almost no information about which will pass, so the sequence truncates after
$0.1$--$0.9$ patterns on average and forfeits the rest. Spending the error
budget adaptively, as Holm and Hochberg do, beats spending it in a fixed order.

\paragraph{The divided bound and the deployed rule.}
The bound $\alpha/\Pr[\textsc{answer}\mid\Sset]$ is weak on patterns that answer
rarely (on CMU-MOSEI the text-missing pattern abstains entirely under the
deployed rule, \cref{tab:answerrate}). We therefore do not rely on the divided form in the
risk-capped experiments. There, each per-pattern threshold is set by the
split-select/validate rule of \cref{prop:rcps}: the pattern's calibration cases
are halved uniformly at random. A candidate confidence cutoff is chosen on the selection half (by the
Clopper--Pearson scan, though any selector measurable in that half is allowed).
The single chosen cutoff is then tested once on the held-out validation half
with an exact one-sided Clopper--Pearson bound at confidence $1-\delta$, deployed
iff the bound is $\le\alpha$ and abstaining otherwise.

\noindent\textbf{Proof of \cref{prop:rcps}.} Fix a pattern $\Sset$ and, for a cutoff
$\lambda$, define over the validation half
\[
  A_i(\lambda)=\mathbf 1\{c_i\ge\lambda\},\qquad
  E_i(\lambda)=\mathbf 1\{\yhat_i\neq t_i\}\,A_i(\lambda),
\]
with $N_\lambda=\sum_i A_i(\lambda)$ answered cases, $K_\lambda=\sum_i E_i(\lambda)$
errors, and population selective risk
$r_\Sset(\lambda)=\Pr[\,\yhat\neq t\mid c\ge\lambda,\Sset\,]$,
where $c_i$ is the completion-averaged top confidence and $t$ the calibration
target. The candidate cutoff $\lambda_\Sset$ is a fixed function of the selection
half, so conditional on the selection half and on $N_{\lambda_\Sset}=n$, the
validation points are i.i.d.\ from the conditional law given
$\{c\ge\lambda_\Sset,\Sset\}$ and
\[
  K_{\lambda_\Sset}\;\big|\;N_{\lambda_\Sset}=n
  \;\sim\;\mathrm{Binomial}\bigl(n,\ r_\Sset(\lambda_\Sset)\bigr)
\]
(a sum of conditionally i.i.d.\ Bernoulli error indicators), with $n=0$ forcing
abstention. Deploying iff the one-sided Clopper--Pearson
bound satisfies $U_{\mathrm{CP}}(K_{\lambda_\Sset},N_{\lambda_\Sset};\delta)\le\alpha$
therefore gives, for every value of the selection half and every $n\ge0$,
$\Pr[\,r_\Sset(\lambda_\Sset)>\alpha\ \text{and deploy}\mid\cdot\,]\le\delta$
(Clopper--Pearson exactness),
and integrating over both preserves the bound (tower property): one exact binomial test, no
monotonicity assumption on $r_\Sset(\cdot)$, no multiple-testing structure. \hfill$\square$

Patterns whose validation half cannot guarantee
(a Clopper--Pearson bound at $\delta{=}0.1$ needs at least $22$ error-free
answered cases to reach $\alpha{=}0.10$) abstain entirely; abstention on hard
patterns is thus the correct output of risk control, not a loophole in it. The
divided bound is the clean population statement, and the deployed calibration is
the finite-sample tool. This rule controls the selective risk directly rather
than through the coverage bound of \cref{thm:coverage}: it deploys the scalar
cutoff $\lambda_\Sset$ in place of the coverage quantile $\tau_\Sset$ of
\cref{asm:polcal}, on the same threshold-free calibration split, so the
exchangeability that licenses it is unchanged.

\paragraph{Pair look-ahead versus myopic strict-shrink acquisition (\cref{prop:synergy}).}
The statement and proof are for the population residual set
$\Lset_\infty(x_\Sset)=\{c:\Pr_{z\sim\gen(\cdot\mid x_\Sset)}[\argmax_y\rcls(x_\Sset,z)_y=k]>0\}$,
the support of the completion-induced decision distribution; the deployed
$\Lset$ is its $K$-sample estimate, and the transfer to the deployed loop is
quantified below. Consider the exclusive-or construction with two acquirable
sources $a$ and $b$,
each a single bit, and label $y=a\oplus b$. Conditioned only on the observed
sources at a state, $\Lset_\infty$ is the set of labels still
consistent with what has been read. With neither $a$ nor $b$ observed, both labels
remain possible, so $|\Lset_\infty|=2$. Observing $a$ alone leaves $y$ equal to either
$a\oplus0$ or $a\oplus1$ as $b$ ranges over its two values, so both labels remain
and $|\Lset_\infty|=2$; by symmetry observing $b$ alone also leaves $|\Lset_\infty|=2$.
Observing the pair $\{a,b\}$ fixes $y=a\oplus b$, so $|\Lset_\infty|=1$. Call an input on
which this holds a synergy-only input, and suppose such inputs occur with
probability $p>0$. A greedy one-at-a-time policy buys a source only while some
single source strictly shrinks $\Lset$. On a synergy-only input no single source
shrinks $\Lset$, so the greedy policy makes no purchase and stops with
$|\Lset|=2$, hence never resolves the input (never collapses $\Lset$ to a
singleton) regardless of how it scores
individual sources. Take the pair look-ahead policy to make the same purchases as the greedy policy on
every input some single source can settle, and on the synergy-only inputs to
evaluate the residual shrinkage of $\{a,b\}$, find it collapses $|\Lset|$ to $1$,
and buy the pair at any budget $\ge2$. By construction the two policies then agree
off the synergy event, so the resolved fractions differ only on the
synergy-only inputs, where the greedy policy resolves none and the look-ahead
policy resolves all. At any budget $\ge2$ the greedy policy therefore resolves a
fraction smaller by at least $p>0$ (the mass of the synergy event). (Resolution, $|\Lset|{=}1$, is the Phase-1
screen; the Phase-2 guarantee then applies to the resolved cases as in the
coverage and risk-control results.) A policy forced to exhaust its budget can reach
the pair by exhaustion; the race in \cref{ssec:coalitions} includes that forced
variant as a control, and it still falls short.

Finite-$K$ transfer. The deployed loop computes the $K$-sample estimate
$\Lset$, not $\Lset_\infty$, so with probability $2(1/2)^K=2^{1-K}$ per
evaluation (independence of the $K$ completion draws) all $K$ draws of a
balanced binary state land on one class and the
sampled set is spuriously a singleton. That bounds a single
evaluation, and a route makes $N_{\mathrm{eval}}$ of them (one per visited state,
candidate group, and outer draw: at budget $B$ with $r_{\max}{=}2$,
$N_{\mathrm{eval}}\le B\,(1+D(M+\binom{M}{2}))$), and a union bound over the
$N_{\mathrm{eval}}$ evaluations gives
recovery of every population mode of mass $\ge\rho_{\mathrm{mode}}$ along the whole route with
probability at least $1-N_{\mathrm{eval}}\,C(1-\rho_{\mathrm{mode}})^K$. At $K{=}15$, $C{=}2$,
$\rho_{\mathrm{mode}}{=}1/2$, and $N_{\mathrm{eval}}\le200$ this failure probability is below
$1.3\%$ per input, and the observed $100\%$ XOR resolution
is consistent with it; the population statement of
\cref{prop:synergy} therefore transfers to the deployed loop up to this margin,
and the calibration guarantees are unaffected either way, since calibration and
test run the identical finite-$K$ procedure.

\begin{proposition}[Residual-set reduction is non-submodular, so complementarity is invisible at order 1]\label{prop:synergy}
There exist decision problems on which $f(\{a\})=f(\{b\})=0$ yet $f(\{a,b\})=1$
for every input (an exclusive-or: either source alone is uninformative, the pair
decides), so $f$ is non-submodular, and every greedy one-at-a-time policy that
buys only while some single source strictly shrinks $\Lset_\infty$ stops
immediately on those inputs, resolving the residual set on a strictly smaller
fraction at the same budget than a policy that looks ahead over pairs. In the
balanced construction the population completion-averaged posterior is uniform
over the two labels, so the stalled policy answers with error $1/2$ on those
inputs. The residual-set statement is unconditional; the gap carries through to
the \emph{guaranteed} decision under the further conditions that
$\alpha<1/2$, so that no cutoff can certify the stalled policy, that the
reference model is exact on the completed input, and that each pattern's
calibration pool clears the floor of \cref{rem:cost}, so that the pair policy's
zero-risk answer is certifiable at all. Absent the first, the unresolved policy
may itself certify; absent the third, both policies abstain and the gap closes
trivially.
\end{proposition}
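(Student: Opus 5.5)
The plan is to prove \cref{prop:synergy} in three parts: a concrete XOR construction, the non-submodularity and greedy-stall claims on that construction, and then the transfer of the gap to the guaranteed decision under the stated side conditions.

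\textbf{Construction.} Take two acquirable sources $a,b$, each a uniform independent bit, and the label $y=a\oplus b$. Take the reference model $\rcls$ to be exact on complete inputs, so $y^{\mathrm{full}}=y$, and take the completion model $\gen$ to be the true conditional law. Define $f(A)=|\Lset_\infty(\emptyset)|-|\Lset_\infty(A)|$ as the residual-set reduction from acquiring $A$ at a cold start.

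\textbf{Non-submodularity and the greedy stall.} Observing $a$ alone leaves $b$ uniform, so the completion-induced decision $a\oplus b$ takes both values with positive mass and $|\Lset_\infty|=2$. By symmetry the same holds for $b$ alone, and both bits together fix $y$. Hence $f(\{a\})=f(\{b\})=0$ and $f(\{a,b\})=1$, so the marginal gain of $b$ is larger given $a$ than given nothing. This violates diminishing returns, so $f$ is not submodular. For the greedy policy, reuse the argument already written out above. At the initial state no single source strictly shrinks $\Lset_\infty$, so a policy that buys only on strict single-source shrinkage stops with $|\Lset_\infty|=2$. The pair look-ahead policy buys $\{a,b\}$ at any budget $\ge2$. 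Define both policies to agree off the synergy event, which has mass $p$ (here $p=1$; embed it in a mixture for general $p$). The resolved fractions then differ by exactly $p$.

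\textbf{Error $1/2$ of the stalled policy.} At the empty pattern the completion-averaged posterior is $\phat_y=\Pr[a\oplus b=y]=1/2$ for each label. So whatever tie-break gives $\yhat$, it is wrong with probability $1/2$ against $y^{\mathrm{full}}=y$, because $y$ is independent of $\yhat$.

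\textbf{Transfer to the guaranteed decision.} This is the main obstacle, since it must pass through Phase~2 and not just $|\Lset|$. The stalled policy's terminal pattern is $\emptyset$. Its confidence $c$ is constantly $1/2$, so every cutoff either answers all cases or none. Answering all cases has population selective risk $1/2>\alpha$. By \cref{prop:rcps} a deployed cutoff has risk at most $\alpha$ with probability $\ge1-\delta$, so on that event this pattern abstains. For the pair policy, the terminal pattern $\{a,b\}$ has confidence $1$ and zero error. The validation half then holds $k=0$ errors among $n$ answered cases, and by \cref{rem:cost} it deploys whenever $n\ge n_{\min}$. That is exactly the third condition, and it holds with high probability when the pool is large. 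The guaranteed answer-rate gap is therefore $p$ times the pair policy's deployment probability.

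Finally, explain each caveat by dropping its condition. If $\alpha\ge1/2$, the constant-$1/2$ pattern may certify. If the pools fall below $n_{\min}$, both policies abstain and the gap is zero. The delicate point throughout is to keep the statements at the population level for $\Lset_\infty$ and to rely on the finite-$K$ transfer paragraph above for the deployed $\Lset$.
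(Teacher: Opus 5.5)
Your proposal is correct and follows the paper's own argument. It uses the same exclusive-or construction and the same state-by-state computation: $|\Lset_\infty|$ stays at $2$ after either single source and drops to $1$ after the pair. It also uses the same device of letting the pair look-ahead policy agree with the greedy one off the synergy event, so the resolved fractions differ by the synergy mass $p$.

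What you add spells out what the paper leaves as ``immediate from the construction.'' First, you give the explicit diminishing-returns check for $f$. Second, you carry the gap through Phase~2: by \cref{prop:rcps} the stalled pattern abstains on the $1-\delta$ event, since every answering cutoff has risk $1/2>\alpha$. By \cref{rem:cost} the error-free pair pattern deploys once $n\ge n_{\min}$.
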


\noindent
The proof is immediate from the construction. Scope: this rules out the myopic
strict-shrink policy only. A policy that exhausts its budget can reach the pair, and does: at $M{=}5$ the forced-spend control reaches $0.786$ resolution
against the coalition's $0.850$, a real but much smaller margin than against the
myopic policy's $0.726$ (\cref{app:mfive}). The obstruction is the look-ahead
order rather than the score: expected utility and information gain are flat on
the same construction, and the argument recurs at every order.

\subsection{The deployed rule}\label{app:rcps}

\Cref{prop:rcps} states the deployed rule; its proof is
in \cref{app:theory}.

\noindent
A pattern therefore answers only with a cutoff that survived one exact test on
data it was not chosen on, and data-poor patterns abstain rather than guess. The
proof is in \cref{app:theory}.

\begin{figure}[t]
\centering
\includegraphics[width=\columnwidth]{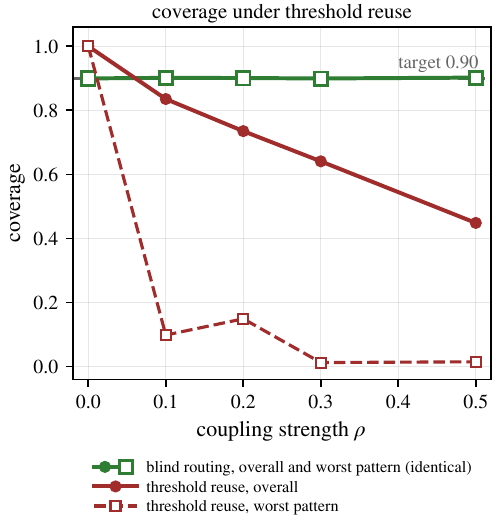}
\caption{\textbf{The boundary is approached continuously, not at a corner.}
Coverage against the strength $\rho_{\mathrm{cpl}}$ of the coupling between stopping and the calibrated threshold
($40$ resplits $\times$ $3$ seeds). Blind routing holds the $0.90$ target overall
and on its worst pattern, which take the same value at every coupling strength and
are therefore drawn as one curve. Threshold reuse decays to $0.45$ overall and its
worst pattern to $0.01$ (\cref{rem:sharp,tab:synthce}).}
\label{fig:coupling}
\end{figure}

\subsection{What validation costs}\label{app:cost}

\begin{proposition}[What a certificate costs]\label{prop:cost}
Write $\kappa_\alpha=\ln\!\bigl(1/(1-\alpha)\bigr)$. A pattern answering $n$ calibration cases
with no error is certifiable by the exact binomial test at level $\delta'$ if and
only if $n\ge\ln(1/\delta')/\kappa_\alpha$. Hence the smallest certifiable pattern holds
$n_{\min}=\lceil\ln(1/\delta)/\kappa_\alpha\rceil$ cases under a pointwise certificate,
$n_{\min}+\lceil\ln F/\kappa_\alpha\rceil$ under any correction dividing $\delta$ by a family
size $F$, and $n_{\min}+\lceil\ln|\mathcal{S}|/\kappa_\alpha\rceil$ under \cref{prop:seq}
applied to $\mathcal{R}\times\mathcal{S}$. A grid of policies therefore costs
$\ln|\mathcal{R}|/\kappa_\alpha$ error-free cases per pattern generically, up to rounding, and
none sequentially. At $\alpha{=}\delta{=}0.1$, $\kappa_\alpha{=}0.105$ and $n_{\min}{=}22$; with
$|\mathcal{R}|{=}20$ and $|\mathcal{S}|{=}8$ the three requirements are $22$, $71$
and $42$ cases. Since $\kappa_\alpha=\alpha+O(\alpha^2)$, the price of a large family grows
like $\ln F/\alpha$ as the risk target tightens (\cref{app:newproofs}).
\end{proposition}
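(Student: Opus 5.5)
The proof reduces to the zero-error Clopper--Pearson computation already carried out for \cref{rem:cost}, repeated at a generic level $\delta'$. First I would fix a pattern with $n$ answered cases and $k=0$ errors. The one-sided upper bound at confidence $1-\delta'$ is the $(1-\delta')$-quantile of $\mathrm{Beta}(1,n)$, which equals $1-\delta'^{1/n}$. Equivalently, the exact $p$-value $\Pr[\mathrm{Bin}(n,\alpha)\le 0]=(1-\alpha)^n$ is at most $\delta'$. Both forms give the same acceptance condition, $(1-\alpha)^n\le\delta'$. Taking logarithms, and noting that $\ln(1-\alpha)<0$, this becomes $n\ln(1/(1-\alpha))\ge\ln(1/\delta')$, that is, $n\ge\ln(1/\delta')/\kappa_\alpha$. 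This establishes the ``if and only if'' claim. Because the bound is monotone in $k$, zero errors is the best case, so the condition is also necessary for any certification.

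Next I would instantiate $\delta'$ for each regime and round up to integers.

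\begin{itemize}
\item \textbf{Pointwise.} Here $\delta'=\delta$, which gives $n_{\min}=\lceil\ln(1/\delta)/\kappa_\alpha\rceil$.
\item \textbf{Correction dividing by $F$.} This applies to Bonferroni and to the leading Holm threshold. With $\delta'=\delta/F$ the requirement is $\lceil(\ln(1/\delta)+\ln F)/\kappa_\alpha\rceil$. This is at most $n_{\min}+\lceil\ln F/\kappa_\alpha\rceil$ and differs from it only through rounding. I would state it as ``up to rounding'' rather than as an exact equality.
\item \textbf{Sequential rule of \cref{prop:seq}.} Only the $|\mathcal{S}|$ patterns are corrected, while the grid $\mathcal{R}$ is walked at full per-pattern level. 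So $\delta'=\delta/|\mathcal{S}|$, which gives $n_{\min}+\lceil\ln|\mathcal{S}|/\kappa_\alpha\rceil$.
\end{itemize}

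Comparing the generic correction with $F=|\mathcal{R}||\mathcal{S}|$ against the sequential rule isolates an extra cost of $\ln|\mathcal{R}|/\kappa_\alpha$ for the generic correction and none for the sequential one.

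Then I would check the numbers. At $\alpha=0.1$, $\kappa_\alpha=\ln(10/9)\approx0.1054$, and $\ln 10/0.1054\approx21.85$, which rounds to $22$. For the generic correction, $F=160$ gives $\ln(1600)/0.1054\approx70.0$, reported as $71$ after rounding. For the sequential rule, $\ln 80/0.1054\approx41.6$, which rounds to $42$. Finally, the Taylor expansion $\kappa_\alpha=-\ln(1-\alpha)=\alpha+\alpha^2/2+\dots$ gives the $\ln F/\alpha$ scaling.

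The only delicate step is the rounding in the $F$-corrected case, since $\lceil a+b\rceil$ need not equal $\lceil a\rceil+\lceil b\rceil$. The reported $71$ appears to correspond to the sum-of-ceilings form, $22+\lceil\ln160/\kappa_\alpha\rceil=22+49=71$, whereas the direct computation gives $\lceil70.0\rceil=70$ or $71$ depending on precision. I would therefore present the sum-of-ceilings expression explicitly as a sufficient count, valid because $\lceil a\rceil+\lceil b\rceil\ge\lceil a+b\rceil$, and note that the exact threshold may be smaller by at most one.
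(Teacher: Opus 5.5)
Your proof is correct and follows the paper's argument step for step: the zero-error $p$-value $(1-\alpha)^n\le\delta'$, monotonicity in $k$, the substitutions $\delta'=\delta,\ \delta/F,\ \delta/|\mathcal{S}|$, and $\kappa_\alpha=\alpha+\alpha^2/2+\cdots$. Your caveat that $n_{\min}+\lceil\ln F/\kappa_\alpha\rceil$ is only a sufficient count is a real sharpening that the paper's proof glosses over, since the exact minimum $\lceil\ln(F/\delta)/\kappa_\alpha\rceil$ can be one smaller (at $\alpha=\delta=0.1$, $F=6$ gives $39$ rather than $40$); one numerical fix, though: with unrounded $\kappa_{0.1}=0.10536$ one gets $\ln 1600/\kappa_\alpha\approx70.02$, so the exact threshold for $F=160$ is unambiguously $71$ and coincides with $22+49$, and your $70.0$ is an artifact of rounding $\kappa_\alpha$ up to $0.1054$.
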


\noindent
\subsection{A sequential test for a scalar-indexed policy family}\label{app:seq}

\Cref{thm:aware} needs family-wise control over $\mathcal{D}\times\mathcal{S}$ and
nothing more, so any valid procedure may supply it. A generic one pays for every
member of $\mathcal{D}$. When that family happens to be indexed by a single scalar,
as both of ours are, the index removes the grid from the correction entirely.

\begin{proof}[Proof of \cref{prop:cost}]
With $k=0$ errors the exact one-sided $p$-value is
$\Pr[\mathrm{Bin}(n,\alpha)\le 0]=(1-\alpha)^n$, so the test rejects at level
$\delta'$ exactly when $(1-\alpha)^n\le\delta'$, i.e.\ $n\ln(1-\alpha)\le\ln\delta'$,
i.e.\ $n\ge\ln(1/\delta')/\ln\bigl(1/(1-\alpha)\bigr)=\ln(1/\delta')/\kappa_\alpha$. Any $k>0$
only raises the $p$-value, so this is the smallest certifiable $n$. Substituting
$\delta'=\delta$, $\delta'=\delta/F$ and $\delta'=\delta/|\mathcal{S}|$ and using
$\ln(F/\delta)=\ln(1/\delta)+\ln F$ gives the three requirements; subtracting the
second from the third gives the grid's price, $\ln|\mathcal{R}|/\kappa_\alpha$ up to rounding
when $F=|\mathcal{R}||\mathcal{S}|$. Finally $\kappa_\alpha=-\ln(1-\alpha)=\alpha+\alpha^2/2+\cdots$,
so $\ln(1/\delta')/\kappa_\alpha\sim\ln(1/\delta')/\alpha$ as $\alpha\to0$.
\end{proof}

\noindent
The proposition is what makes the choice of correction a budgeting question rather
than a matter of taste: at $\alpha=\delta=0.1$ a pattern needs $22$ error-free
answered cases to be certifiable at all, $42$ under \cref{prop:seq} over a
$20\times8$ family, and $71$ under a generic correction over the same family. At
$\alpha=0.05$ the same three become $45$, $86$ and $144$, which is why tightening
the risk target penalises large families disproportionately.

\begin{proposition}[Sequential certificate]\label{prop:seq}
Fix, from data independent of the certification sample, a scalar-indexed family
$\{d_\eta\}_{\eta\in\mathcal{R}}$ of complete acquisition policies, a pattern set
$\mathcal{S}$, and an ordering of $\mathcal{R}$ within each pattern. Testing
$H_{\eta,\Sset}:R_\Sset(\eta)>\alpha$ along that order with any valid $p$-value,
each at level $\delta/|\mathcal{S}|$, and stopping at the first non-rejection
controls the family-wise error rate over $\mathcal{R}\times\mathcal{S}$ at $\delta$,
so \cref{thm:aware} applies to the rejected pairs.
\end{proposition}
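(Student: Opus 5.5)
The plan is to treat the procedure as a union of $|\mathcal{S}|$ fixed-sequence tests, one per pattern, and to combine them by a Bonferroni split across patterns only. First I will condition on the independent data that fixes the family $\{d_\eta\}$, the set $\mathcal{S}$ and the per-pattern orderings. This makes every $R_\Sset(\eta)$ a fixed population quantity and every null $H_{\eta,\Sset}$ a fixed hypothesis, exactly as in the proof of \cref{thm:aware}. The validity of each $p$-value $p_{\eta,\Sset}$ is assumed: under $H_{\eta,\Sset}$ it is super-uniform, so $\Pr[p_{\eta,\Sset}\le\delta/|\mathcal{S}|]\le\delta/|\mathcal{S}|$. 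An example is the exact binomial $p$-value used there, which is valid because $d_\eta$ does not read the certification sample.

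Next, fix one pattern $\Sset$ and write its ordering as $\eta_1,\eta_2,\dots$. I will show that the event ``some true null at $\Sset$ is rejected'' is contained in a single event of the form $\{p_{\eta_j,\Sset}\le\delta/|\mathcal{S}|\}$. The index $j$ is the first position in the order whose null is true, and it is deterministic because the order and the nulls are fixed. The containment holds because the walk stops at its first non-rejection. Any rejection at or beyond position $j$ therefore requires rejecting $H_{\eta_j,\Sset}$ itself. This is the standard fixed-sequence argument, the same one sketched for the fixed-sequence arm in \cref{app:newproofs}, and it gives that pattern's family-wise error at most $\delta/|\mathcal{S}|$. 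If no null at $\Sset$ is true, the error probability is $0$.

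Finally, a union bound over the $|\mathcal{S}|$ patterns bounds the probability of any false rejection in $\mathcal{R}\times\mathcal{S}$ by $\delta$. This needs no independence across patterns, so the overlapping answered sets within or across patterns cause no difficulty. Integrating over the conditioning data preserves the bound, since $\delta$ does not depend on it. The rejected set is then a valid $\widehat{\mathcal{W}}$ for \cref{thm:aware}.

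The main obstacle is making sure $j$ really is non-random. This requires the ordering to be chosen from data disjoint from the certification sample, because a data-chosen order would make the first true null random and break the single-test reduction. I will state this conditioning explicitly. I will also note that monotonicity of $R_\Sset(\eta)$ in $\eta$ is not needed, which matches the remark that a poor ordering costs power but not validity.
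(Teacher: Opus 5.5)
Your proposal is correct and follows the paper's proof essentially step for step. In each pattern you fix the first true null in the pre-specified order, note that any false rejection there requires rejecting that one hypothesis at level $\delta/|\mathcal{S}|$, and union-bound over the $|\mathcal{S}|$ patterns; your explicit conditioning on the independent data, followed by integration, spells out the paper's one-line remark that the bound is unconditional because the ordering does not depend on the certification sample.
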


\begin{proof}[Proof of \cref{prop:seq}]
Fix $\Sset$ and let $j^\star$ be the first index in that pattern's ordering whose
null $H_{\eta_{\Sset,j^\star},\Sset}$ is true; if there is none the sequence makes no
false rejection. Because the sequence stops at its first non-rejection, any false
rejection in pattern $\Sset$ requires $H_{\eta_{\Sset,j^\star},\Sset}$ itself to be
rejected, an event of probability at most $\delta/|\mathcal{S}|$ by validity of
the $p$-value at that single hypothesis. The ordering is a function of data
independent of the certification sample, so this bound is unconditional. A union
bound over the $|\mathcal{S}|$ patterns gives $\delta$.
\end{proof}

\noindent
Two things are worth noting. The correction now scales with $|\mathcal{S}|$ alone,
so by \cref{rem:cost} the leading threshold needs
$\ln(\delta/|\mathcal{S}|)/\ln(1-\alpha)$ error-free answered cases in the first
pattern rejected rather than $\ln(\delta/|\Lambda||\mathcal{S}|)/\ln(1-\alpha)$;
at $\alpha=\delta=0.1$ with $|\mathcal{S}|=8$ and $|\Lambda|=20$ that is $42$
against $70$. And the ordering is where the family's structure enters: it must be
pre-specified, but it may be estimated, and in \cref{ssec:aware} we take it from
the selection half of the calibration data, which is disjoint from the
certification half by construction. A poor ordering costs power, never validity.
\Cref{tab:aware} reports what it buys.

\subsection{Continuing instead of declining}\label{app:continue}

\Cref{thm:aware} requires the deployed policy to suppress \emph{answers} at an
uncertified pattern rather than to keep buying, because continuing would send a
different population into the later patterns and their certificates are computed
under the deployed policy. That is a statement about one family, not about what is
certifiable: the continuing policy is simply a different member of a larger one.

\begin{corollary}[Continuing is available, and priced]\label{cor:continue}
For $A\subseteq\mathcal{S}$ let $d_{\lambda,A}$ be the policy that routes as
$d_\lambda$ except that at every pattern in $A$ it continues buying rather than
stopping. Each $d_{\lambda,A}$ is itself a complete acquisition policy, so if the
enlarged family $\{d_{\lambda,A}:\lambda\in\Lambda,\,A\subseteq\mathcal{S}\}$ is
fixed independently of the certification sample, \cref{thm:aware} applies to it
verbatim, and a certified $d_{\hat\lambda,\hat A}$ may continue at the patterns of
$\hat A$ with every answered pattern still carrying the $\alpha$ cap. By
\cref{prop:cost} the enlargement costs $\lceil|\mathcal{S}|\ln 2/\kappa_\alpha\rceil$
additional error-free answered cases per pattern under a generic correction: at
$\alpha=\delta=0.1$ with $|\mathcal{S}|=8$, that is $53$ cases on top of the $71$
the cutoff grid already requires.
\end{corollary}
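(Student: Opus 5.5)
The proof has two parts: membership in a pre-registered family, and a count of error-free cases obtained from \cref{prop:cost}.

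\textbf{Step 1: the enlarged family is admissible for \cref{thm:aware}.} For each fixed pair $(\lambda,A)$, the policy $d_{\lambda,A}$ is a deterministic measurable map of the augmented pair $(x,\omega)$. It follows $d_\lambda$'s purchase rule and replaces ``stop'' by ``continue buying'' at patterns in $A$. Neither the cutoff $\lambda$ nor the set $A$ reads the certification sample. So $d_{\lambda,A}$ is a complete policy, and the set of patterns it produces is still contained in $\mathcal{S}$, since continuing only reaches patterns the loop can already produce. The family $\{d_{\lambda,A}\}$ is fixed once $\Lambda$ and $\mathcal{S}$ are fixed independently of the sample. For every pair $\bigl((\lambda,A),\Sset\bigr)$, the quantity $R_\Sset(d_{\lambda,A})$ is therefore a fixed population quantity. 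The proof of \cref{thm:aware} then applies verbatim: the errors at each pair are conditionally binomial, the exact $p$-values are super-uniform under the nulls, and Holm (or any FWER procedure) yields a simultaneously valid rejected set. Selecting $(\hat\lambda,\hat A)$ afterwards by any rule, and declining at uncertified patterns, keeps the $\alpha$ cap on the $1-\delta$ event.

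The one point to check is the remark at the end of \cref{thm:aware} that restriction may only suppress answers. Here the continuation at $\hat A$ is built into the certified member itself, not applied after certification. The populations reaching later patterns are therefore exactly the ones the certificate was computed under. This is the step where care is needed, and the argument is the two-type example run in reverse: the certified object already includes the redirected traffic.

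\textbf{Step 2: the price.} The family size grows from $F_0=|\Lambda||\mathcal{S}|$ to $F=2^{|\mathcal{S}|}F_0$. Apply \cref{prop:cost} with $\delta'=\delta/F$. The zero-error requirement $\lceil\ln(F/\delta)/\kappa_\alpha\rceil$ exceeds the requirement for the cutoff grid by
\[
\ln\bigl(2^{|\mathcal{S}|}\bigr)/\kappa_\alpha=|\mathcal{S}|\ln 2/\kappa_\alpha
\]
up to rounding, which gives the stated ceiling. Numerically, with $|\mathcal{S}|=8$ and $\kappa_{0.1}\approx0.1054$, this is $5.545/0.1054\approx52.6$, so $53$ additional cases. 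The same computation with $F_0=160$ gives the $71$ already recorded in \cref{prop:cost}.

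The main obstacle is consistency of rounding. The total $\lceil(\ln(1/\delta)+\ln F_0+|\mathcal{S}|\ln2)/\kappa_\alpha\rceil$ need not equal $71+53$ exactly. I would therefore state the increment as ``up to rounding'', as \cref{prop:cost} does, and verify the numbers directly.
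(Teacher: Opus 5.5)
Your proposal is correct and follows the paper's proof. You check that each $d_{\lambda,A}$ is a calibration-independent complete policy, so \cref{thm:aware} applies with the continuation built into the certified member rather than imposed afterwards, and you then use \cref{prop:cost} to turn the $2^{|\mathcal{S}|}$ multiplicity into $|\mathcal{S}|\ln 2/\kappa_\alpha$ extra error-free cases.

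Your rounding caution is accurate. The combined requirement is $\lceil(\ln 10+\ln 160+8\ln 2)/\kappa_{0.1}\rceil=123$, so ``$53$ on top of $71$'' overcounts by one because the ceilings are taken separately. Your claim that continuing stays inside $\mathcal{S}$ is an assumption the paper also makes implicitly; it holds if $\mathcal{S}$ is taken to contain every pattern the loop can reach.
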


\begin{proof}
Each $d_{\lambda,A}$ maps an input to purchases, a stopping point and an answer, so
the nulls $H_{d,\Sset}:R_\Sset(d)>\alpha$ are well defined over the enlarged family,
and \cref{thm:aware} asks only that the family be fixed independently of the
certification sample. The family size is multiplied by $2^{|\mathcal{S}|}$, and
\cref{prop:cost} turns that factor into
$\lceil\ln 2^{|\mathcal{S}|}/\kappa_\alpha\rceil=\lceil|\mathcal{S}|\ln 2/\kappa_\alpha\rceil$ extra
error-free cases in the first pattern rejected. Patterns in $A$ contribute no
answered cases under $d_{\lambda,A}$ and simply never appear as answering patterns.
\end{proof}

\noindent
So the choice between declining and continuing is a budgeting question of the same
kind as the choice of correction: enlarging the family is always sound and
\cref{prop:cost} says what it costs in calibration cases. Whether the enlarged
family is affordable is a property of the pool, not of the theorem.

\begin{table}[t]
\centering\small
\setlength{\tabcolsep}{3pt}
\begin{tabular}{@{}lccc@{}}
\toprule
\textbf{Certificate} & \textbf{MOSEI} & \textbf{IEMOCAP} & \textbf{MHEALTH} \\
\midrule
empirical risk       & $1.00$ & $1.00$ & $1.00$ \\
pointwise            & $0.12$ & $0.12$ & $0.02$ \\
simult., one cutoff  & $\mathbf{0.00}$ & $\mathbf{0.00}$ & $\mathbf{0.00}$ \\
simult., per-pattern & $\mathbf{0.00}$ & $\mathbf{0.00}$ & $\mathbf{0.00}$ \\
\bottomrule
\end{tabular}
\caption{Empirical counterpart of the validity boundary: the fraction of the $50$
resplits on which some answered pattern exceeds the $0.10$ selective-risk cap, when
acquisition stops on a cutoff taken from the same calibration half. The four arms share models, splits, purchase order and exact
$p$-values, and differ only in how the cutoff is certified. ``simult.'': simultaneous over the
policy family. This is an empirical stress-test frequency, not an estimate of the
failure probability the theorem bounds. Answer rates and worst-pattern risks are in \cref{tab:aware}.}
\label{tab:breach}
\end{table}

\begin{table*}[t]
\centering\small
\setlength{\tabcolsep}{3pt}
\begin{tabular}{@{}lcccccc@{}}
\toprule
& \multicolumn{2}{c}{\textbf{CMU-MOSEI}} & \multicolumn{2}{c}{\textbf{IEMOCAP}}
& \multicolumn{2}{c}{\textbf{MHEALTH}} \\
\cmidrule(lr){2-3}\cmidrule(lr){4-5}\cmidrule(lr){6-7}
\textbf{Stopping rule} & \textbf{ans.} & \textbf{brc} & \textbf{ans.} & \textbf{brc}
& \textbf{ans.} & \textbf{brc} \\
\midrule
\multicolumn{7}{@{}l}{\emph{routing never reads calibration; per-pattern control only}} \\
threshold-free      & $0.969$ & $0.08$ & $0.897$ & $0.04$ & $0.911$ & $0.00$ \\
\midrule
\multicolumn{7}{@{}l}{\emph{routing reads the calibration half}} \\
pointwise ($p\le\delta$)  & $0.410$ & $0.12$ & $0.534$ & $0.12$ & $0.861$ & $0.02$ \\
uniform, one cutoff       & $0.363$ & $\mathbf{0.00}$ & $0.365$ & $\mathbf{0.00}$ & $\mathbf{0.619}$ & $\mathbf{0.00}$ \\
uniform, per-pattern (Holm) & $0.914$ & $\mathbf{0.00}$ & $0.770$ & $\mathbf{0.00}$ & $0.033$ & $\mathbf{0.00}$ \\
\textbf{uniform, per-pattern (seq.)} & $\mathbf{0.938}$ & $\mathbf{0.00}$ & $\mathbf{0.777}$ & $\mathbf{0.00}$ & $0.363$ & $\mathbf{0.00}$ \\
\bottomrule
\end{tabular}
\caption{\textbf{Acquisition may read the calibrated cutoff, and what validation
costs.} ``ans.'': answered fraction; ``brc'': fraction of the $50$ resplits on
which some answered pattern exceeds the $0.10$ cap; bold marks the best certified
answer rate. The blind arms are calibrated per pattern at level $\delta$ each,
whereas every uniform arm controls all patterns at once. ``Per-pattern'' indexes a
cutoff vector by a scalar risk level, built on a selection half and certified on
the disjoint half; ``seq.'' certifies it by \cref{prop:seq} rather than Holm.
Spending the whole budget answers $0.995$--$0.999$; a cutoff frozen on a reserved third answers $0.720$--$0.839$ and breaches on up to $24\%$ of resplits; the empirical-risk arm answers $0.665$--$0.996$ and breaches on every resplit. Those arms, worst-pattern risk, purchases per case and an adversarial pointwise selection are in \cref{app:aware}. Answer rates here are not comparable to \cref{tab:maintl},
since every case may buy back all missing sources.}
\label{tab:aware}
\end{table*}

\subsection{The calibration-aware arms in full}\label{app:aware}

\Cref{tab:aware} reports seven arms from one matched pass (\texttt{calaware\_v6},
$50$ resplits, $\alpha=\delta=0.1$, grid $\Lambda$ of $15$ cutoffs, risk grid
$\mathcal{R}$ of $20$ levels).

\section{Error-vs-coverage results and the safety attribution}\label{app:aurc}

\subsection{Per-pattern answer rates}\label{app:answerrate}
We quantify how much answer rate survives the per-pattern risk control (here
``coverage'' is the selective-prediction answered fraction, not conformal coverage),
since the bound $\alpha/\Pr[\textsc{answer}\mid\Sset]$ is loose when a group answers
rarely. \Cref{tab:answerrate} reports both quantities per pattern on CMU-MOSEI. Where
it can guarantee, RouteCert answers ($0.99$ with nothing missing, about $0.35$ with audio
or vision gone); on text-missing cases its validation test fails and it abstains
entirely, the guarantee-or-abstain rule working as designed. It keeps per-group error
under the cap by declining exactly the cases with no guaranteed answer, not by
inflating risk on the cases it does answer. MHEALTH behaves the same way, with all
two-sensor dropouts abstaining.

\begin{table*}[htbp]
\centering\small
\begin{tabular}{@{}lcc@{}}
\toprule
\textbf{Missing pattern (CMU-MOSEI)} & \textbf{answer rate} & \textbf{error $\mid$ answer} \\
\midrule
nothing missing      & $0.99$ & $0.000$ \\
audio missing        & $0.36$ & $0.033$ \\
vision missing       & $0.34$ & $0.038$ \\
audio+vision missing & $0.05$ & $0.012$ \\
text missing (hardest) & $0.00$ & abstains \\
\bottomrule
\end{tabular}
\caption{Per-pattern answer rate and per-pattern error among answered cases on
CMU-MOSEI at the $10\%$ cap. RouteCert answers most cases where it can guarantee and
declines the hardest pattern (text missing), holding every answered group at or below
the cap.}
\label{tab:answerrate}
\end{table*}

\paragraph{Where the safety comes from.}
The per-pattern safety is conferred by the per-pattern (Mondrian) conformal
guarantee, which we credit as standard. That guarantee is modular, and we show it
directly (\cref{tab:confbase}). Give expected utility, information gain, and RouteCert each
the same per-pattern guarantee, and all three are safe (worst-pattern risk
$\le0.034$); give each a single global threshold, and all three breach the cap
($0.108$ to $0.128$). What caps each pattern's risk is the guarantee, not which source
to buy, so the safety result is not evidence that RouteCert's acquisition rule beats
expected utility or information gain; on raw acquisition quality the rules are at
parity.

\begin{table*}[htbp]
\centering\small
\begin{tabular}{@{}lcc@{}}
\toprule
\textbf{Acquisition rule (CMU-MOSEI)} & \textbf{global threshold} & \textbf{+ per-pattern cert.} \\
\midrule
expected utility   & $0.108$ & $0.028$ \\
information gain    & $0.125$ & $0.026$ \\
RouteCert               & $0.128$ & $0.034$ \\
\bottomrule
\end{tabular}
\caption{Worst-pattern reference-relative selective risk (target $0.10$) after each
acquisition rule buys one source, under a single global threshold versus the same
per-pattern (Mondrian) guarantee, mean over $50$ splits. Risks differ from
\cref{ssec:safety} because every rule here first buys one source, softening the
hardest pattern; the contrast is unchanged. Every rule is safe with the per-pattern
guarantee and breaches with the global threshold: the safety is the guarantee, not
the acquisition rule.}
\label{tab:confbase}
\end{table*} What is RouteCert-specific is twofold. First, the guarantee keeps holding under RouteCert's own
adaptive acquisition (\cref{thm:selrisk} and the reliability sweep,
\cref{ssec:reliability}): a per-pattern guarantee naively attached to a
confidence-selecting policy can break under that policy's adaptivity, which the
threshold-free buy signal prevents. Second, the coalition look-ahead reaches the
needs-both cases that no single-source rule can target (\cref{ssec:coalitions}). The
new science is that the guarantee survives adaptive acquisition, and that groups, not
sources, are sometimes the right unit to buy.

\section{Coalition look-ahead at $M{=}5$}\label{app:mfive}

To test the look-ahead beyond three sources we use five separately acquirable
sensor streams of the MHEALTH rig (chest accelerometer, ECG, ankle accelerometer,
ankle gyroscope, arm accelerometer; $M{=}5$, 12 activities, subject-independent
split) and rerun the residual-set resolution race from a cold start, five training runs.

\paragraph{The race.}
At $M{=}5$ the needs-a-pair fraction is $0.37\pm0.03$, comparable to the $0.28$ on the
three-source benchmarks, so coalitions do not become irrelevant as sources are added.
Cost-aware one-at-a-time policies stop early (mean spend $2.6$) and reach at most
$0.726$. Letting a single-source policy keep buying to the four-acquisition budget
lifts it to $0.786$, yet the targeted pair rescue still resolves more,
$0.850\pm0.014$ against at most $0.786\pm0.023$, at modestly higher cost ($3.19$
against $2.80$ acquisitions). The look-ahead costs $5+\binom{5}{2}=15$ candidate
evaluations per round against $31$ nonempty subsets exhaustively, at $1.3$\,ms per
case on one GPU, matching the complexity claim of \cref{sec:method}.
\Cref{tab:e2e} reports every policy end to end on the three-source benchmarks.
Under the end-to-end guarantee at $M{=}5$ (\cref{prop:rcps}, $\alpha{=}\delta{=}0.1$,
five seeds $\times$ $50$ resplits) the coalition guarantees the most cases under
either target, reference-relative $0.535$ (risk $0.004$) against at most $0.466$, and
true-label $0.467$ (risk $0.010$) against at most $0.435$, guaranteed answers being
${\approx}95\%$ correct because the reference is strong.

\begin{table*}[htbp]
\centering\small
\begin{tabular}{@{}lcccccccc@{}}
\toprule
& \multicolumn{4}{c}{CMU-MOSEI} & \multicolumn{4}{c}{IEMOCAP} \\
\cmidrule(lr){2-5}\cmidrule(lr){6-9}
\textbf{Policy} & \textbf{ans.} & \textbf{risk} & \textbf{worst} & \textbf{err$_{\mathrm{tl}}$} & \textbf{ans.} & \textbf{risk} & \textbf{worst} & \textbf{err$_{\mathrm{tl}}$} \\
\midrule
one-at-a-time (cost-aware) & $0.265\pm0.05$ & $0.049$ & $0.051$ & $0.325$ & $0.132\pm0.04$ & $0.021$ & $0.022$ & $0.138$ \\
one-at-a-time (forced)     & $0.485\pm0.04$ & $0.062$ & $0.067$ & $0.434$ & $0.201\pm0.06$ & $0.027$ & $0.028$ & $0.194$ \\
two-step sequential planner & $0.451\pm0.04$ & $0.062$ & $0.063$ & $0.412$ & $0.279\pm0.06$ & $0.031$ & $0.032$ & $0.200$ \\
open-loop pair commit      & $0.586\pm0.05$ & $0.037$ & $0.055$ & $0.461$ & $0.337\pm0.08$ & $0.015$ & $0.023$ & $0.390$ \\
\textbf{RouteCert coalition}    & $\mathbf{0.588\pm0.05}$ & $0.035$ & $0.054$ & $0.458$ & $\mathbf{0.324\pm0.03}$ & $0.014$ & $0.023$ & $0.378$ \\
\bottomrule
\end{tabular}
\caption{End-to-end guarantee under adaptive acquisition: each deployable policy
runs to its final state, then \cref{prop:rcps} ($\alpha{=}\delta{=}0.1$) applies
per terminal pattern (five seeds $\times$ $50$ resplits). Columns: guaranteed answer
rate, realized and worst per-pattern reference-relative risk, true-label error
among answered cases (``err$_{\mathrm{tl}}$'', context only). All risk $\le0.10$
under the system's own routing; the coalition adds $10$--$12$ answer points over
the best single source, beats the two-step planner ($0.588$ vs $0.451$ on
CMU-MOSEI), and matches the open-loop pair commit: replanning is the deficit.}
\label{tab:e2e}
\end{table*}

\section{Sensitivity, robustness, and ablations}\label{app:sens}

\paragraph{Family-wise control, measured.} \Cref{prop:rcps} states each pattern's
guarantee at $\delta{=}0.1$; simultaneous control of all $|\mathcal{S}|$ patterns
follows by running the identical rule at the Bonferroni level $\delta/|\mathcal{S}|$.
Running both arms on all three risk-control benchmarks: on CMU-MOSEI
($\delta/5{=}0.02$) the simultaneous guarantee holds every pattern at worst risk
$0.023$ for $5.6$ answer-rate points ($0.353\to0.297$); on IEMOCAP, worst $0.000$ at
$0.217\to0.188$. On the smallest pool (MHEALTH) the cost is near-total abstention
(MHEALTH, $|\mathcal{S}|{=}7$, $0.335\to0.010$): the tighter
Clopper--Pearson quantile needs more error-free validation cases than the fragmented
pools provide, which is exactly the floor of \cref{rem:cost} binding. Simultaneous
control is thus practical on the larger benchmarks; headline numbers are per-pattern
$\delta{=}0.1$ and labeled as such. The same accounting bounds scaling: terminal patterns
can number up to $2^M$, so rare patterns increasingly hit the abstain-entirely branch,
but a provable fallback keeps the method usable as $M$ grows.

\Cref{tab:simulfull} runs all of the simultaneous arms of \cref{rem:cost} in a
single matched pass, so the comparison is free of run-to-run noise: one training
run per benchmark, one set of $50$ resplits, one cutoff-selection rule shared by
the test-corrected arms. Two levers appear. Correcting the \emph{test} only,
comparing each $p_\Sset$ to $\delta/|\mathcal{S}|$, is the baseline the
domination argument targets, and Holm and Hochberg meet or beat it on every
benchmark as proved ($+0.009$ answered on CMU-MOSEI, $+0.005$ on MHEALTH for
Hochberg, ties elsewhere). Correcting the \emph{selection} of the candidate
cutoff as well, the variant reported in the paragraph above, moves the answer
rate independently and in either direction: it gains $0.023$ on CMU-MOSEI, where
the more conservative candidate clears its test more easily, and loses $0.007$ on
MHEALTH, where thin pools leave little certifiable. The two levers compose,
and only the first carries a domination guarantee.

\begin{table*}[htbp]
\centering\small
\setlength{\tabcolsep}{4pt}
\begin{tabular}{@{}lcccccc@{}}
\toprule
& \textbf{pointwise} & \multicolumn{4}{c}{\textbf{simultaneous at $90\%$}} & \textbf{global} \\
\cmidrule(lr){2-2}\cmidrule(lr){3-6}\cmidrule(lr){7-7}
\textbf{Benchmark} & \textbf{per-pat.} & \textbf{Bonf.} & \textbf{Holm} & \textbf{Hochb.} & \textbf{Bonf.$^\ast$} & \textbf{recal.} \\
\midrule
CMU-MOSEI & $0.350$ & $0.274$ & $0.283$ & $0.283$ & $0.305$ & $0.342$ \\
IEMOCAP   & $0.203$ & $0.188$ & $0.188$ & $0.188$ & $0.188$ & $0.189$ \\
MHEALTH   & $0.294$ & $0.007$ & $0.007$ & $0.012$ & $0.007$ & $0.341$ \\
\midrule
\multicolumn{7}{@{}l}{\emph{worst-pattern realized selective risk (target $0.10$)}} \\
CMU-MOSEI & $.034$ & $.018$ & $.021$ & $.021$ & $.024$ & $.145$ \\
IEMOCAP   & $.004$ & $.000$ & $.000$ & $.000$ & $.000$ & $.064$ \\
MHEALTH   & $.014$ & $.001$ & $.001$ & $.002$ & $.001$ & $.092$ \\
\bottomrule
\end{tabular}
\caption{Simultaneous risk control, one matched run per benchmark ($50$
resplits, $\alpha{=}\delta{=}0.1$). The MHEALTH row is the fixed-sequence protocol
run over $G{=}7$ patterns, so its pointwise and global columns are not the run behind
the MHEALTH row of \cref{tab:pervsglobal} and the two are not directly comparable. ``Bonf.'', Holm and Hochberg correct the
validation test only and share a cutoff-selection rule, which is the setting of
\cref{rem:cost}; the step procedures never fall below Bonferroni there.
``Bonf.$^\ast$'' also tightens selection to $\delta/|\mathcal{S}|$. Global
recalibration is shown for scale: it answers most but is the only column whose
worst-pattern risk breaches the cap. A fixed-sequence arm ordered by pool size
is also valid but far weaker ($0.146$/$0.068$/$0.029$), since with
comparably sized patterns the ordering predicts little and the sequence
truncates after $0.1$--$0.9$ patterns.}
\label{tab:simulfull}
\end{table*}

\paragraph{Scalable calibration by pattern coarsening.} \Cref{asm:polcal} is
agnostic to which threshold-free partition indexes the groups: any function of the
augmented pair measurable before calibration yields a valid Mondrian guarantee. So
replace the fine final-pattern partition (up to $2^M$ cells) by a coarser
threshold-free one, e.g.\ grouping terminal patterns by the number of missing sources
($M{+}1$ cells). Since the coarsening depends only on the routing output, the
augmented-exchangeability argument of \cref{app:theory} applies verbatim at the
coarser resolution, so \cref{thm:coverage,prop:rcps} hold with the coarse cell in
place of $\Sset$.

\begin{proposition}[Coverage under a coarser routing partition]\label{prop:coarsen}
Let $\Pi$ be any partition of the terminal patterns that is a measurable function of
the threshold-free routing output (independent of the calibration scores). Then
under \cref{asm:polcal}, for each cell $B\in\Pi$, the calibrated guarantee covers
its target at level $1-\alpha$ and the risk rule of \cref{prop:rcps} controls
selective risk at $\alpha$, both conditional on $B$.
\end{proposition}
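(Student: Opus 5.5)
The plan is to reduce \cref{prop:coarsen} to the arguments already given for \cref{thm:coverage} and \cref{prop:rcps}, by showing that the coarse cell label is itself a threshold-free group label. Define $B(x,\omega)=\beta(\pi(x,\omega))$, where $\beta$ maps each terminal pattern to its cell of $\Pi$. Because $\Pi$ is fixed before the calibration scores are seen, $\beta$ is a fixed measurable map. Then $B(x,\omega)$ is a fixed measurable function of the augmented pair alone, so it plays exactly the role that $\Sset_{\mathrm{fin}}=\pi(x,\omega)$ played in the augmented-exchangeability paragraph of \cref{app:theory}. If $\Pi$ were chosen from data, say by the adaptive backoff, I would first condition on a disjoint selection split, as in the sample-splitting paragraph. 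That makes $\beta$ fixed, and the bound is preserved on averaging.

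For coverage, I would rerun the Mondrian argument with $B$ in place of $\Sset$. The augmented pairs are i.i.d., and membership in a cell $B$ is a per-point event. Hence, conditional on the test point landing in $B$ and on the count $N_B=n_B$, the $n_B$ calibration scores in $B$ and the test score are exchangeable draws from the conditional law given $\{\beta\circ\pi=B\}$.

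One detail needs care. Within a coarse cell, each point is scored at its own terminal pattern, so the score is $s(x_{\pi(x,\omega)},y)$. This is still a fixed measurable function of $(x,\omega,t)$. Exchangeability therefore needs no common pattern, only a common score map. The rank of the test score is then uniform, or super-uniform when ties occur. Taking $\tau_B$ as the $\lceil(1-\alpha)(n_B+1)\rceil$-th order statistic gives $\Pr[t\in\Cset_B\mid B]\ge1-\alpha$. Integrating over $N_B$ by the tower property removes the conditioning on the count, and the cases $n_B=0$ and $\tau_B=+\infty$ are trivial, as in the original proof.

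For risk control, I would repeat the proof of \cref{prop:rcps} inside each cell. Split the cell's calibration points uniformly at random. Choose $\lambda_B$ as any measurable function of the selection half. Conditional on that half and on $N_{\lambda_B}=n$, the validation error count is $\mathrm{Binomial}(n,r_B(\lambda_B))$, where $r_B(\lambda)=\Pr[\yhat\neq t\mid c\ge\lambda,B]$. Clopper--Pearson exactness then gives $\Pr[r_B(\lambda_B)>\alpha\text{ and deploy}]\le\delta$, and integrating out the conditioning again uses the tower property. The conclusion is the selective-risk statement conditional on $B$. It should be stated as the cell-averaged risk, not the risk at each pattern inside the cell.

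The main obstacle is this last point. The coarse guarantee controls a mixture over the patterns in $B$, so it does not control each of them separately. I would say explicitly that the statement is conditional on $B$ only. I would also check carefully that heterogeneous per-pattern scores within one cell do not break exchangeability. They do not, because the score is a single measurable function of the augmented pair and the target, and the routing never reads $\tau_B$ or $\lambda_B$.
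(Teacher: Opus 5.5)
Your proposal is correct and follows the paper's own argument. The paper notes that the coarse cell is a fixed function of the threshold-free routing output, hence of the augmented pair, so the augmented-exchangeability argument and the proofs of \cref{thm:coverage} and \cref{prop:rcps} apply verbatim with the cell in place of $\Sset$. Your remarks that heterogeneous per-pattern scores within a cell do not break exchangeability, and that the risk guarantee is cell-averaged rather than per-pattern, are accurate details the paper leaves implicit.
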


\begin{corollary}[Data-adaptive coarsening]\label{cor:adaptcoarsen}
Split the calibration set into a selection part and a disjoint
guarantee part. Let $B(\cdot)$ assign each terminal pattern a cell
label from a fixed threshold-free hierarchy (for instance exact pattern
$\to$ missing-count cell $\to$ root) by any rule measurable with respect to the
selection part, such as the finest cell whose routed selection count is at least
$m_0$, backing off when it is not. The deployed groups are the preimages
$\{\Sset: B(\Sset)=b\}$, one per assigned label $b$; these are disjoint by
construction even when different patterns back off to different levels of the
hierarchy (a pattern that keeps its exact cell is thereby excluded from
any coarser cell's group, so no guarantee pool overlaps another).
Guaranteeing within each preimage group on the guarantee part,
\cref{thm:coverage} and \cref{prop:rcps} hold conditional on the assigned group;
for \cref{prop:rcps} the guarantee part of each group is itself split again
into its selection and validation halves, as always.
\end{corollary}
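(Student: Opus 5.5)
The plan is to reduce the corollary to \cref{prop:coarsen} applied conditionally on the selection part, and then average over the selection part, in the same way the sample-splitting paragraph of \cref{app:theory} handles a policy fitted on $D_A$ and graded on $D_B$.

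\textbf{Step 1: fix the selection part.} Condition on the selection part. The assignment $B(\cdot)$ is a measurable function of that part only, so conditionally it becomes a deterministic map from terminal patterns to labels. The hierarchy is fixed before calibration, and each label names a threshold-free cell. Composing with the routing map $\pi(x,\omega)$ from \cref{asm:polcal} gives a grouping $x\mapsto B(\pi(x,\omega))$. This is a fixed measurable function of the augmented pair and never reads a guarantee-part score or threshold.

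\textbf{Step 2: the preimage groups form a partition.} I would check explicitly that the preimage groups $\{\Sset:B(\Sset)=b\}$ are disjoint and cover $\mathcal{S}$. This is immediate because $B$ is a function: every pattern receives exactly one label. It holds even when sibling patterns back off to different levels, since a pattern that keeps its exact cell is not in the preimage of the coarser label. The partition property is the only hypothesis of \cref{prop:coarsen} beyond measurability, and it is what makes the guarantee pools non-overlapping. This bookkeeping is the point I expect to need the most care. The main obstacle is not probabilistic; it is confirming that ``cell'' in the hierarchy is replaced by ``preimage group'', so that a coarse cell from which a finer child has withdrawn still defines a legitimate group of the partition.

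\textbf{Step 3: apply the earlier results conditionally.} With the partition fixed, I would invoke \cref{prop:coarsen} with the guarantee part as the calibration set. The guarantee part is independent of the selection part and i.i.d.\ with the test point, so \cref{asm:polcal} holds verbatim conditionally. The augmented-exchangeability argument then makes the guarantee-part points and the test point that fall in a given group i.i.d.\ from the conditional law of that group.
\begin{itemize}
\item For coverage, \cref{thm:coverage} gives conditional coverage $\ge 1-\alpha$ given the group, for every value of the selection part.
\item For risk, each group's guarantee data is halved again, and the proof of \cref{prop:rcps} applies within that group. Conditional on the group assignment and the selection half, the validation errors are binomial, so the exact Clopper--Pearson test bounds the probability of deploying a cutoff with risk above $\alpha$ by $\delta$.
\end{itemize}

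\textbf{Step 4: average over the selection part.} Neither bound depends on the realized selection part. Integrating over it by the tower property therefore preserves both statements. The result is conditional on the assigned group, with probability taken over the full calibration draw, as claimed.
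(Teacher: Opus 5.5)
Your proposal is correct and follows essentially the paper's own proof. Conditioning on the selection part makes $B$, and hence its preimage partition, a fixed threshold-free coarsening that is independent of the guarantee scores, so \cref{prop:coarsen} applies on the guarantee part and yields \cref{thm:coverage} and \cref{prop:rcps} conditional on each group. Your explicit tower-property averaging over the selection part in Step 4 is left implicit in the paper's two-line proof, but it is the same argument as the paper's sample-splitting paragraph.
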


\begin{proof}
$B$ is a fixed function of the selection part, hence independent of the
guarantee scores (the two calibration parts are disjoint), and the preimages
$\{\Sset:B(\Sset)=b\}$ partition the
terminal patterns. Conditioning on $B$, this partition is a fixed threshold-free
coarsening of the final-pattern partition, so \cref{prop:coarsen} applies to the
guarantee part and gives coverage and the risk control of \cref{prop:rcps}
conditional on each group.
\end{proof}

\noindent
This trades granularity for data: a data-starved pattern is guaranteed at its coarse
cell's resolution rather than abstaining, and the number of cells can be held at
$O(M)$ rather than $O(2^M)$, so calibration size per cell scales as $n/O(M)$,
avoiding exponential fragmentation. The practitioner chooses the partition, from
fully per-pattern (finest, most abstention) to by-missing-count ($M{+}1$ cells),
before seeing calibration; the backoff that guarantees each case at the finest cell
its data supports, coarsening only when data-starved, is the data-adaptive instance
\cref{cor:adaptcoarsen} makes valid, with the resolution chosen on a held-out
selection split.

Scope. The coarsening guarantee is exercised at $M{=}5$, at $M{=}8$, and on a second
wearable benchmark; the results establish that the guarantee, not merely
the heuristic, extends to coarser partitions. The escape is
sharpest at $M{=}5$ under simultaneous control, where exact per-pattern conditioning answers
$0.51$ pointwise but nearly collapses to $0.10$ once the Bonferroni split runs over
${\approx}26$ cells, while by-missing-count coarsening ($O(M)$ cells) answers $0.87$
pointwise and retains $0.61$ under simultaneous control (a sixfold recovery), every
resolution holding realized risk at or below the cap.

The same escape holds at a higher source count and under direct true-label
calibration on the two strong-reference wearable benchmarks.
Tokenizing MHEALTH into eight per-sensor streams ($M{=}8$) fragments the exact
per-pattern partition into ${\sim}160$ observed terminal patterns, at which exact
conditioning abstains entirely; grouping by missing-count restores a true-label
operating point of $0.74$ answered at $0.4\%$ realized risk. This confirms that the $O(M)$ partition of \cref{prop:coarsen} keeps the true-label guarantee usable as the source and pattern counts grow.

\subsection{Robustness and ablations}\label{app:ablation}
Three checks support the main results. The optional true-label screen retargets the
guarantee from the full-information model to the real label, trading answer rate for
true-label risk on CMU-MOSEI as follows.

\begin{center}\small
\begin{tabular}{@{}lcccc@{}}
\toprule
\textbf{screen setting} & 1 & 2 & 3 & 4 \\
\midrule
answered & $15.9\%$ & $5.3\%$ & $1.9\%$ & $0.6\%$ \\
true-label risk & $0.44$ & $0.32$ & $0.25$ & $0.07$ \\
\bottomrule
\end{tabular}
\end{center}

\noindent
The screen needs plentiful per-pattern true-label calibration data, so on the smaller
IEMOCAP it moves true-label risk little at usable answer rates. Perturbing the model
that samples missing sources leaves the guarantee intact, and a sweep over the number
of completions $K$ is stable once $K$ is moderate.
\begin{table*}[t]
\centering\small
\setlength{\tabcolsep}{3pt}
\begin{tabular}{@{}lccccccc@{}}
\toprule
 & \textbf{needs} & \textbf{cost} & \multicolumn{2}{c}{\textbf{guaranteed}} & &
 \textbf{true-label err.} & \\
\cmidrule(lr){4-5}
\textbf{Benchmark (ref.\ acc.)} & \textbf{a pair} & \textbf{sgl$\to$C} &
\textbf{sgl.} & \textbf{RouteCert} & $\Delta$ & \textbf{sgl.\,/\,RouteCert} & \textbf{cells} \\
\midrule
\multicolumn{8}{@{}l}{\emph{Block 1. Target: the true label. Guaranteed true-label correctness.}} \\
MHEALTH-5 ($0.97$)           & $0.37$ & $3.1{\to}3.4$ & $0.435$ & $\mathbf{0.467}$ & $+.032$ & $.011/.010$ & $25{\to}27$ \\
\midrule
\multicolumn{8}{@{}l}{\emph{Block 2. Target: $y^{\mathrm{full}}$. Guaranteed reference-decision stability;}} \\
\multicolumn{8}{@{}l}{\emph{\phantom{Block 2. }true-label error is an unguaranteed diagnostic.}} \\
CMU-MOSEI ($0.50$) & $0.28$ & $1.5{\to}1.8$ & $0.485$ & $\mathbf{0.588}$ & $+.103$ & $.434/.458$ & $6{\to}6$ \\
IEMOCAP ($0.60$)   & $0.28$ & $1.9{\to}2.1$ & $0.201$ & $\mathbf{0.324}$ & $+.123$ & $.194/.378$ & $4{\to}5$ \\
\bottomrule
\end{tabular}
\caption{Coalition against the stronger single-source policy, split by guarantee
target. Reference accuracy is in parentheses; Block~2's true-label column is a
diagnostic no guarantee covers, so a small reference-relative risk there means the
system reproduces the reference, not that either is correct. ``guaranteed'':
answer rate after RouteCert-Risk; ``sgl.'': the stronger single-source policy;
``cost'': mean sources acquired; ``cells'': terminal patterns created. Worst-pattern
risk stays under the cap throughout ($\le0.046$). $^\ddagger$$O(M)$ coarsening.}
\label{tab:maintl}
\end{table*}

\section{Clinical ECG case study (PTB-XL)}\label{app:ptbxl}
This section documents the ECG study of \cref{ssec:ptbxl} in full.

\paragraph{Task, sources and folds.}
PTB-XL v1.0.3 \citep{wagner2020ptbxl,goldberger2000physionet} provides
$12$-lead records with diagnostic superclasses assigned by a cardiologist. We keep
one record per patient (minimum \texttt{ecg\_id}) and predict the myocardial-
infarction superclass against all other records. Acquisition proceeds through three
nested stages, lead~I, leads~I+II, and the clinical eight-lead set, at cumulative
ordinal costs $0.1$, $1.0$ and $5.0$, so the full recording costs $5.0$ and the
increments are $0.1$, $0.9$ and $4.0$. The official patient-safe folds are used throughout, one role each: folds~1--5 fit the nine XResNet1d18 models (three stages $\times$ three seeds), fold~6 selects each checkpoint, fold~7 fits the absolute-logit correctness calibrators, fold~8 selects the single confidence threshold $0.90$, fold~9 ($n{=}1942$) certifies, and fold~10 ($n{=}1904$) is opened once for the held-out test. No patient appears in two folds.

\paragraph{What was fixed in advance.}
Every design choice was set before the certification and test folds were opened:
the dataset version and patient selection, the label, the three stages and their
costs, the nine fitted checkpoints and their seeds, the fold-7 calibrators, the
confidence threshold $0.90$ selected once on fold~8, $\alpha{=}0.1$, total
$\delta{=}0.1$, and Holm over exactly nine hypothesis slots. One earlier
requirement, that some acquired prefix also certify an unconditional predict-all
hypothesis, was dropped because it tests an all-case classifier while a selective
method is permitted to abstain after acquiring everything; nothing else changed,
and folds~9 and~10 were opened only afterward. This is prospective methodology
development rather than independent confirmation, which would require a second
site.

\paragraph{Held-out result.}
\Cref{tab:ptbxl} reports the nine certified slots and \cref{fig:ptbxl} the held-out
cost--answer frontier. The
valid arm certifies all three adaptive terminals and answers $71.2\%$ of fold-10
patients at $7.4\%$ disagreement, within the $0.1$ cap, at attempted ordinal cost
$2.442$ of $5.0$. The registered fixed-prefix controls do not dominate it: the
fixed-selective family certifies at every stage but spends more to reach the same
answered fraction, and no fixed-unconditional hypothesis certifies at all, since
predicting for every patient at any single stage exceeds the cap
($0.192$, $0.152$ and $0.115$ at lead~I, limb and clinical respectively).

\begin{figure*}[htbp]
\centering
\includegraphics[width=0.98\textwidth]{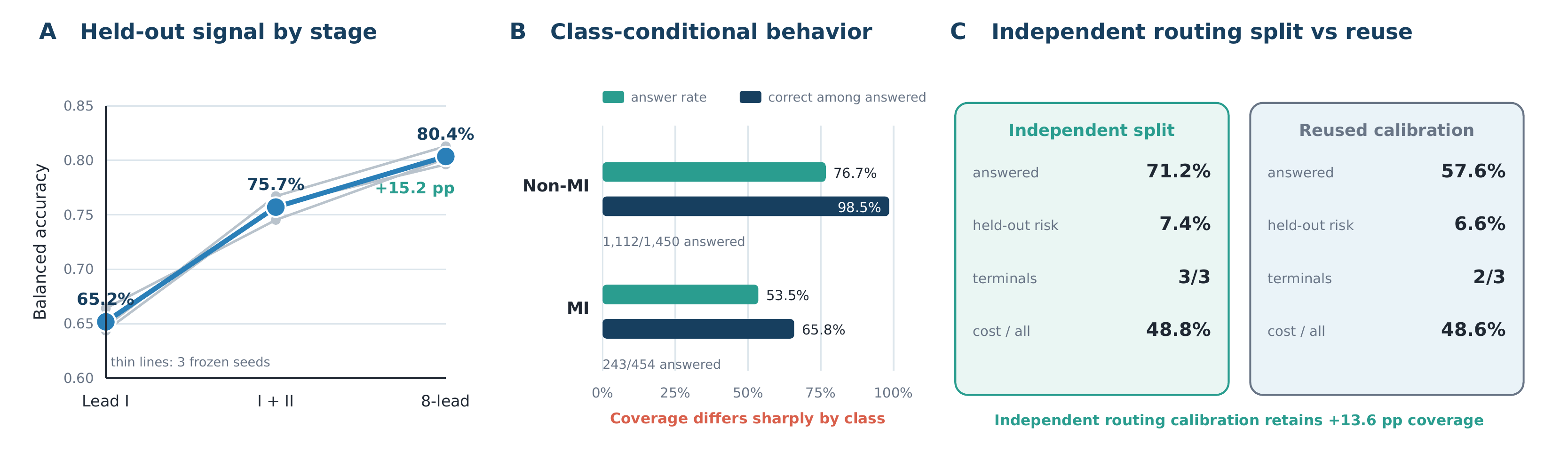}
\caption{\textbf{What the held-out PTB-XL fold adds beyond \cref{fig:ptbxlflow}.}
\textbf{(A)} Balanced accuracy by acquired stage over three frozen seeds, so each
added lead contributes held-out signal rather than only cost. \textbf{(B)} Answer
rate and correctness among answered patients by reference class, the main clinical
qualification, since abstention and residual error concentrate on the MI class.
\textbf{(C)} Certifying on a draw independent of the one routing consults answers $71.2\%$ against $57.6\%$ and certifies three terminals against two, so the independent split \cref{rem:sharp} calls for costs nothing here. The patient flow, the
cost--answer frontier and the risk at each terminal are in \cref{fig:ptbxlflow},
and the nine certified slots in \cref{tab:ptbxl}.}
\label{fig:ptbxl}
\end{figure*}

\paragraph{Tier-2 control and class-conditional behavior.}
The nominal arm lets routing consult the same draw used to certify it, which
\cref{rem:sharp} says is not licensed. It answers $57.6\%$ at $6.6\%$ risk and
certifies only two terminals, so on this dataset the invalid design is worse rather
than dangerously better, and we report it as neutral. Answering is not uniform
across classes in either arm: the valid arm answers $76.7\%$ of non-MI against
$53.5\%$ of MI patients. Abstention therefore concentrates on the positive class,
which matters clinically and is a property of the operating point rather than of the
guarantee.

\begin{table}[t]
\centering\small
\setlength{\tabcolsep}{4pt}
\begin{tabular}{@{}llrrc@{}}
\toprule
\textbf{Family} & \textbf{Terminal} & $n$ & \textbf{emp.\ risk} & \textbf{certified} \\
\midrule
adaptive          & lead~I    & $618$  & $0.049$ & yes \\
adaptive          & limb      & $452$  & $0.035$ & yes \\
adaptive          & clinical  & $318$  & $0.066$ & yes \\
fixed-selective   & lead~I    & $618$  & $0.049$ & yes \\
fixed-selective   & limb      & $955$  & $0.023$ & yes \\
fixed-selective   & clinical  & $1216$ & $0.032$ & yes \\
fixed-uncond.     & lead~I    & $1942$ & $0.192$ & no  \\
fixed-uncond.     & limb      & $1942$ & $0.152$ & no  \\
fixed-uncond.     & clinical  & $1942$ & $0.115$ & no  \\
\bottomrule
\end{tabular}
\caption{The nine Holm-corrected slots on the certification fold ($n{=}1942$),
$\alpha{=}\delta{=}0.1$. Adaptive terminals are the patients the deployed policy
stops at; fixed-selective applies the same cutoff at a fixed stage;
fixed-unconditional predicts for every patient at that stage and certifies nowhere.
Held-out fold-10 deployment of the certified adaptive policy answers $0.712$ at
risk $0.074$ and cost fraction $0.488$.}
\label{tab:ptbxl}
\end{table}

\paragraph{Limitations.}
Five apply. The certified quantity is agreement with a cardiologist's assigned
diagnostic superclass, not acute-MI outcome truth. All leads are recorded
simultaneously, so acquisition is simulated by withholding leads rather than by
delaying a measurement, and the timing cost of a real staged acquisition is not
represented. Costs are ordinal burden tiers, not measured time or money. The
protocol was fixed before the certification and test folds were opened, so this is a
development result rather than independent confirmation. And it is a single-site cohort with no
external-site validation.

\section{Setup and reproducibility}\label{app:setup}

\paragraph{Code.}
A code and data package containing the implementation, job scripts, cached outputs
where available, and a manifest mapping paper objects to artifacts is available from
the author on request. It independently verifies the PTB-XL certificate from shipped
logits. Raw datasets, trained general-benchmark checkpoints, and several supplementary
cached outputs are not included, so the package does not claim exact regeneration of
every reported number.

\paragraph{Splits and models.} On each benchmark the held-out test pool is split
$50/50$ into calibration and evaluation halves, and per-pattern statistics are
averaged over $50$ such splits, which share one trained model per seed and so
quantify calibration randomness only. Test pools hold $4{,}643$ (CMU-MOSEI) and
$938$ (IEMOCAP) cases. The reference is a transformer fusion model (d\_model $256$,
$8$ heads, $3$ layers, $120$ epochs) with validation accuracy $0.51$ on CMU-MOSEI
and $0.71$ on IEMOCAP, while held-out accuracy against ground truth is lower,
$0.49$--$0.50$ and $0.58$--$0.63$ (mean $0.60$) across five seeds. On the wearable benchmarks we quote the reference's validation
accuracy, $0.96$ on the three-pack MHEALTH used for risk control and
$0.96$--$0.97$ on the five-sensor variant (\cref{app:mfive}). The completion model is a masked conditional VAE per benchmark:
an encoder over the observed blocks produces a shared latent (dimension $64$, hidden
width $256$), per-source decoders reconstruct the missing blocks, and training
minimizes masked reconstruction error plus a $\beta{=}0.1$ KL term ($80$ epochs)
under random per-source missingness ($0.4$ per source; at least one source kept on
$85\%$ of batches, the all-missing cold-start mask on the remaining $15\%$). All
missing blocks of one draw are decoded from a single latent sample, so cross-source
dependence is carried by the latent.

\paragraph{Missing-pattern generation.} Each experiment fixes its initial
patterns before any method runs, with no example duplicated across patterns.
The risk-control, adaptive-validity, and mixture-shift studies assign every
held-out case one missing pattern uniformly at random over the benchmark's
pattern set (on CMU-MOSEI: none, audio, vision, text, audio$+$vision missing).
The error-versus-coverage studies start each case with exactly one observed
source chosen uniformly at random. The resolution races start cold, all
sources missing. Calibration cases are pushed through the identical loop, so
each terminal pattern's calibration split is produced by the same routing as its
test split.

\paragraph{Calibration-pool requirements.} The guarantee assumes a calibration pool
in which every acquirable source is present (and carries true labels for the
optional true-label layer), so each terminal pattern can be scored against completions
and, when needed, ground truth. Our controlled-missingness protocol satisfies this
by construction: we start from fully observed held-out cases and hide sources, so
the hidden blocks are always available at calibration and acquisition. A
natural-missingness deployment may not offer such a pool; there the calibration set
must be assembled from cases that have the sources present, which we flag as a data
requirement rather than a free property. Testing that regime properly needs a
cohort in which acquisition was actually ordered and timestamped. A linked emergency-department study such as MIMIC-IV is the natural candidate, and it is future work rather than something the present benchmarks can stand in for.

To probe this regime we induce initial missingness through a label-correlated MNAR
mechanism of strength $\gamma$ (each case's missing source is drawn from a
distribution the label pulls on; $\gamma{=}0$ is missing-completely-at-random) and
calibrate the per-pattern coverage guarantee either through the same mechanism
(matched) or from complete cases masked uniformly (complete-case),
the latter conditioning on the wrong within-pattern label law. The complete-case
gap is small: on CMU-MOSEI its worst per-pattern coverage falls from $0.89$ to
$0.86$ as $\gamma$ grows, about a point more than matched, because the
nonconformity score is only weakly label-dependent; on IEMOCAP the gap stays
within finite-sample noise. Reweighting the complete-case
calibration by the modeled inclusion odds restores coverage. That gap is exactly
the shift term a natural-missingness deployment would add to
$r_{\mathrm{o}}+\bar\varepsilon_{\mathrm{ref}}$: here it is small and estimable
when the mechanism is known.

\paragraph{The deployable acquisition-score estimator.} For a candidate group
$\acoal$ at state $(x_\Sset,\Sset)$, the look-ahead score is
$\Delta(\acoal)=\bigl(|\Lset(x_\Sset)|-|\Lset(x_{\Sset\cup\acoal})|\bigr)/c(\acoal)$,
the per-cost reduction in residual-set size, both sizes from $K$ completion draws.
This is a cost-normalized value-of-information criterion in the spirit of active
feature acquisition \citep{ma2019eddi,li2021gsmrl}; what is new is the unit, a group
rather than a single source, and that it reads only $\Lset$ and the models, never a
calibrated threshold, which keeps Phase~1 valid for the guarantee. In our
simulations the candidate's own block is revealed at its true held-out value when
scoring (all policies alike, so comparisons are unaffected). A deployed system
instead averages $\Delta$ over \gen-draws before paying, and
reruns the resolution race with that estimator (five \gen-draws per candidate, true
block revealed only after purchase): every policy resolves less, and the coalition
gain narrows but persists ($+7.1{\pm}0.4$ points over the forced single-source
control on CMU-MOSEI, $+10.5{\pm}1.6$ on IEMOCAP, versus $+16.3$ and $+23.4$
revealed), positive for every seed.

\begin{algorithm}[t]
\caption{Deployable coalition scoring at one acquisition step (nested Monte Carlo)}
\label{alg:deploy}
\begin{algorithmic}[1]
\Require observed $x_\Sset$, pattern $\Sset$, candidate groups $\{\acoal\}$ of
size $\le r$, completion model \gen, reference model \rcls, costs $\{c(\acoal)\}$, outer
draws $J$, inner fill-ins $K$
\State $\ell_0 \gets |\Lset(x_\Sset)|$ \Comment{residual size from $K$ inner
completions of the missing sources at the current state}
\For{each candidate group $\acoal$}
  \For{$j=1,\dots,J$} \Comment{outer draws, independent across candidates}
    \State draw a hypothetical block $\tilde{x}^{(j)}_{\acoal}\sim\gen(\cdot\mid x_\Sset)$
    \State draw $K$ fresh inner completions of the sources still missing
    after $\acoal$, from $\gen(\cdot\mid x_\Sset,\tilde{x}^{(j)}_{\acoal})$; run
    \rcls\ on each; their distinct answers form $\Lset^{(j)}$
    \State $g_j \gets \bigl(\ell_0-|\Lset^{(j)}|\bigr)/c(\acoal)$
  \EndFor
  \State $\widehat{\Delta}(\acoal)\gets \tfrac{1}{J}\sum_{j=1}^{J} g_j$
\EndFor
\State $\acoal^\star \gets \arg\max_{\acoal}\widehat{\Delta}(\acoal)$
\Comment{singles first, pair rescue only if no single scores positive, as in \cref{alg:routecertmain}}
\State pay $c(\acoal^\star)$, reveal the true blocks of $\acoal^\star$, set
$x_\Sset\gets x_\Sset\cup x_{\acoal^\star}$ and $\Sset\gets\Sset\cup\acoal^\star$
\Comment{$\Sset$ is the observed set, so acquiring adds to it}
\State recompute the state and repeat until $\Lset$ is a singleton, no group
shrinks it, or the budget is spent
\end{algorithmic}
\end{algorithm}

Each candidate costs $J$ outer draws times $K$ inner \rcls/decoder evaluations,
so about $J\cdot K$ reference-model and completion calls per candidate per step
($J{=}5$, $K{=}15$ in our runs). The $K$ inner completions are resampled afresh
inside every outer draw, a nested Monte Carlo estimate rather than a shared inner
pool, and the $J$ outer draws are independent across candidates, so no randomness
is reused between two candidates being compared. The revealed-block estimator of
the paragraph above is the $J{=}1$ special case with $\tilde{x}_{\acoal}$ pinned
to the candidate's true held-out block, which is why it is cheaper and why all
policies share it in the like-for-like ablation.

\subsection{Reproducibility}\label{app:repro}
All reported numbers come from batch jobs on a compute cluster, and every per-job configuration is in the code package. Each baseline is a matched score-level reimplementation on the shared backbone: it controls for the backbone and isolates the acquisition score, but it is not a reproduction of the published system, and no end-to-end superiority claim rests on it. The controlled comparison freezes each policy's route and varies only the certificate (\cref{tab:confbase}). Full proofs of \cref{thm:coverage,thm:selrisk,prop:rcps} and \cref{prop:synergy} appear in \cref{app:theory}; all protocols and per-job configurations are in the code package.

\bibliography{ref}
\end{document}